\definecolor{subsectioncolor}{RGB}{0,51,102}
\newcommand{\ep}{\varepsilon}
\newcommand{\R}{\mathbb R}
\newcommand{\N}{\mathbb N}
\newcommand{\Q}{\mathbb Q}
\DeclareMathOperator{\supp}{supp}
\DeclareMathOperator{\id}{id}
\DeclareMathOperator{\dom}{dom}
\DeclareMathOperator{\img}{img}
\DeclareMathOperator{\ind}{ind}
\DeclareMathOperator{\co}{co}
\newcommand{\PP}{\tilde{P}}
\newcommand{\dotProduct}[2]{\left\langle #1, #2 \right\rangle}
\DeclareMathOperator{\proj}{proj}
\newtheorem{theorem}{Theorem}
\newtheorem{definition}{Definition}
\newtheorem{problem}{Problem}
\newtheorem{proposition}{Proposition}
\newtheorem{lemma}[theorem]{Lemma}
\newtheorem{example}{Example}
\newtheorem{remark}{Remark}
\newtheorem{assumption}{Assumption}
\title{\LARGE \bf
Efficient Path Generation with Curvature Guarantees by Mollification
}
\author{Alfredo González-Calvin, Juan F. Jiménez Castellanos, and Héctor García 
de Marina
\thanks{Alfredo and Juan are with the Department of Computer Architecture and
Automation, Faculty of Physics, Complutense University of Madrid, Madrid, Spain.
Hector is with the Department of Computer Engineering, Automation, and Robotics
(ICAR) \& Institute of Mathematics (IMAG), University of Granada, Spain. This
work is specially supported by the FPU program of the Ministry of science,
innovation and universities of Spain and it is supported by iRoboCity2030-CM,
Ref TEC-2024/TEC-62, financed by Comunidad Autónoma de Madrid (Spain) and by the
ERC Starting Grant iSwarm 101076091 and the RYC2020-030090-I grant from the
Spanish Ministry of Science. Corresponding author {\tt\small alfredgo@ucm.es}.}
}
\begin{document}

\maketitle
\thispagestyle{empty}
\pagestyle{empty}

\begin{abstract}
Path generation, the process of converting high-level mission specifications, such as sequences of waypoints from a path planner, into smooth, executable paths, is a fundamental challenge in mobile robotics. Most path following and trajectory tracking algorithms require the desired path to be defined by at least twice continuously differentiable functions to guarantee key properties such as global convergence, especially for nonholonomic robots like unicycles with speed constraints. Consequently, path generation methods must bridge the gap between convenient but non-differentiable planning outputs, such as piecewise linear segments, and the differentiability requirements imposed by downstream control algorithms. While techniques such as spline interpolation or optimization-based methods are commonly used to smooth non-differentiable paths or create feasible ones from sequences of waypoints, they either produce unnecessarily complex trajectories or are computationally expensive. In this work, we present a method to regularize non-differentiable functions and generate feasible paths through mollification. Specifically, we approximate an arbitrary path with a differentiable function that can converge to it with arbitrary precision. Additionally, we provide a systematic method for bounding the curvature of generated paths, which we demonstrate by applying it to paths resulting from linking a sequence of waypoints with segments. The proposed approach is analytically shown to be computationally more efficient than standard interpolation methods, enabling real-time implementation on microcontrollers, while remaining compatible with standard trajectory tracking and path following algorithms.
\end{abstract}

\section{Introduction}\label{sec: intro}

In robotics and control, motion execution is often decomposed into a hierarchy of interrelated problems. Path planning determines a feasible route through the robot's configuration space—typically as a sparse sequence of waypoints—using methods ranging from graph-search algorithms like A* to sampling-based approaches such as RRT \cite{lavalle2006planning}. Path generation then transforms this abstract plan into a smooth, continuous geometric path or time-parameterized trajectory, often using Spline-based methods, including cubic B-splines~\cite{lau2009kinodynamic, berglund2010planning} and B\'ezier
curves~\cite{yang2010analytical}. These methods ensure $C^2$ continuity and bounded curvature but may produce high curvature and cumbersome paths when the spacing of waypoints is irregular \cite{meek1992}. Gaussian process
regression~\cite{rasmussen2006} and kernel smoothing~\cite{hastie2009} generate $C^\infty$-smooth approximations, trading exact waypoint passage for improved smoothness at the cost of $O(n^3)$ computational complexity and potential numerical instability. Optimization-based approaches~\cite{mellinger2011, ratliff2009chomp} incorporate explicit curvature bounds through convex formulations, enabling real-time computation but requiring careful parameter tuning~\cite{heiden2018grips}. 

However, critically, path generation does not operate in isolation: its output must meet the requirements imposed by the downstream control layer, which typically takes one of two forms. In \textit{path following}, the generated path must be geometrically smooth and continuously differentiable, so that a well-defined tangent and curvature exist at every point for spatial error computation \cite{weijiaarticlegvf}. In \textit{trajectory tracking}, the demands are more stringent: the time parameterization must additionally be consistent with the robot's dynamic capabilities, as a reference trajectory that demands accelerations or velocities beyond the system's limits cannot be reliably tracked regardless of the controller's design \cite{siciliano2009robotics}. In both cases, path generation must internalize the robot's kinematic and dynamic constraints, effectively acting as the bridge that makes a planned route physically executable and controllable.

In this paper, we propose an inexpensive path generation method based on \emph{mollifying} non-differentiable paths, e.g., piecewise functions, using mollifier functions. Mollifiers are smooth functions that, via convolution, approximate non-smooth functions arbitrarily close. They have been extensively used in partial  differential equations~\cite{Evans2022-PDE} and in studying the non-vanishing of generalized Riemann zeta functions~\cite{cech2025optimalitymollifiers}. From an  engineering perspective, mollifiers are used in signal and image processing to estimate the original probability distribution of variables under measurement error~\cite{hohage2024mollifier}. However, to the best of the authors' knowledge, mollifiers have not been applied to path generation problems, despite their natural fit for robotics applications. 

Mollification offers a principled approach to path generation. For example, given a finite collection of ordered waypoints, it produces a smooth curve that approximates the piecewise-linear route to an arbitrary degree of Euclidean closeness. For this specific generated path, our method is well-suited for unicycle-like robots that travel at constant speed with heading-rate constraints; i.e., we can guarantee a maximum curvature for the generated path at the mild cost of slightly deviating from these connecting segments—an unavoidable trade-off, since the union of segments produces a non-differentiable path. Our approach is particularly appealing because it is computationally inexpensive as it can be run on small microcontrollers, and the resulting path, in its parametric representation, can be followed using path-following techniques based on modern guiding vector fields~\cite{weijiaarticlegvf}. We also show how our method can generate 3D paths and guarantee several properties for generic sets of input points, including convexity preservation, enclosure of the generated path, and bounded maximum length when the input is also closed.

The article is organized as follows, Section \ref{sec: problem} introduces the mollifier functions and the requirements for the path generation problem. Section \ref{sec: res} explores which geometrical and analytical aspects of the original function 
are affected by mollification. Convexity, concavity and quasiconvexity are treated, as well as the length of
the path and the curvature. Section \ref{sec: exp} validates the theoretical findings numerically and experimentally. We end the article with some conclusions in Section \ref{sec: con}.

\section{Notation}
\label{sec: not}

In this paper we use Lebesgue integration rather than Riemann integration to 
overcome some limitations with sets of measure zero and interchanging integral 
and limits. Therefore, we denote the Lebesgue measure in one dimension as $\lambda$, i.e., $\lambda := \lambda_1$, and as $\lambda_n$ the $n$'th dimensional Lebesgue measure, and we consider that it is equipped with the Borel sigma algebra. Consequently, all integrals in this paper must be thought as the Lebesgue integral with 
respect to the Lebesgue measure, even if the notation $\lambda$ is sometimes omitted in the integral.
For $p \in [1,\infty)$, we say that a measurable function $f$ is locally $p$-integrable, denoted $f \in L^p_{\mathrm{loc}}(X)$, if it is $p$-integrable in each compact subset of $X$.
We denote by $\id : X \to X$ the identity function defined as
$\id(x) = x$. Then, the indicator function of a set $A$ is defined as 
\begin{equation*}
    \ind_{A}(x) = \begin{cases}
        1 & x \in A, \\
        0 & x \notin A
    \end{cases}.
\end{equation*}
For any two sets $X,Y$ we denote $C(X,Y)$ as the set of continuous functions
from $X$ to $Y$ and for $n \in \N \cup \{\infty\}$ we denote as $C^n(X,Y)$
the set of $n$-times continuously differentiable functions from $X$ to $Y$. For
a set $A$ we denote its closure as $\overline{A}$ and the support of a
real-valued function is defined as $\supp f := \overline{\{x \in \dom f \mid
f(x) \neq 0\}}$, where $\dom f$ is the domain of $f$.
Finally, a (parametric) path in $\R^n$ is a measurable function $f : X \subset \R \to \R^n$. Writing $f = (f_1,\dots,f_n)$, we call $f_i$ the $i$'th component of the path for $i \in \{1,\dots,n\}$. 

\section{Path generation by mollification}
\label{sec: problem}

\subsection{Path generation requirements}
This paper seeks an alternative approach to interpolation and optimization methods for the generation of paths from high-level inputs—such as waypoint sequences—that is computationally efficient, conceptually simple, and has a transparent physical interpretation. Furthermore, the generated paths must be feasible for mobile robots such as unicycles, avoiding unnecessarily complex trajectories. Technically, we consider the transformation of an arbitrary parametric path $f : \R \to \R^n$ into another parametric path that satisfies the following requirements casted as a formal problem.
\begin{problem}[Path generation problem]\label{prob:RegularizationProblem}
    Let $f : \R \to \R^n$ be a parametric path and $\{\ep_i\}_{i=1}^n$ be a collection of positive real numbers. Find a new path $T_{\ep}(f) : \R \to \R^n$ where $T_{\ep}(f) =: \{T_{\ep_i}(f_i)\}_{i=1}^{n}$ and $T_{\ep_i}$ is a functional that acts on each component of $f$ such that:
    \begin{enumerate}
        \item Each component of the parametric path can be made arbitrary close to the original path, that is, $T_{\ep_i}(f_i) \to f_i$ as $\ep_i \to 0$ in some sense of
        convergence.
        \item It provides enough smoothness, that is, for $p \in \N$ with $p \geq 2$, $T_{\ep_i}(f_i) \in C^p(\R,\R)$ for $i \in \{1,\dots,n\}$.
        \item $T_{\ep_i}(f_i)$ is computationally simple. \hfill $\square$
    \end{enumerate}
\end{problem}
The first requirement ensures that the generated path approximates $f$ arbitrarily well through a single independent tuning parameter per dimension, allowing a sequence of functions to be made arbitrarily close to the—potentially non-differentiable everywhere—original path. The second requirement guarantees feasibility for mobile robots like unicycles with speed constraints; we demonstrate how to bound the curvature when the input path consists of concatenated line segments when $n=2$. Finally, the third requirement enables real-time path generation with low-cost hardware.

\subsection{Mollifiers for the path generation}
\label{subsec: Molli}
The solution to Problem \ref{prob:RegularizationProblem} can be obtained by taking a weighted average of the points along the parametric path $f$ through convolution with a certain type of function known as a \textit{mollifier} \cite{Evans2022-PDE}. Let us recall the convolution operation.
\begin{definition}[Convolution]
\label{def:ConvolutionDefinition}
    Let $f,g \in L^1(\R^n)$. The convolution $f*g : \R^n \to \R$ is defined as
    \begin{equation}
        (f*g)(x) := \int_{\R^n}f(y)g(x-y) \, \mathrm{d}\lambda_n(y). \nonumber
    \end{equation}
    \hfill $\square$
\end{definition}
We recall that the convolution is associative, bilinear and commutative. Let us now introduce the mollifier function.
\begin{definition}[Mollifier]\label{def:MollifiersDefinition}
    Let $\varphi \in C^{\infty}(\R^n,\R)$ and for $\ep >0$ define
    $\varphi_{\ep} := \frac{1}{\ep^n}\varphi \circ
    \frac{\id}{\ep}$. We call $\varphi$ a mollifier if it satisfies:
    \begin{enumerate}
        \item $\supp \varphi$ is compact.
        \item $\int_{\R^n}\varphi \, \mathrm{d}\lambda_n = 1$.
        \item For any bounded $f \in C(\R^n,\R)$, $\lim_{\ep\to 0}
        \int_{\R^n}f(x)\varphi_{\ep}(x) \, \mathrm{d}\lambda_n(x) = f(0)$. \hfill $\square$
    \end{enumerate}
\end{definition}
Let us present one of the most popular mollifiers since it will be used extensively in this paper.
\begin{example}\label{example:OurMollifier}
    Let $\varphi : \R \to [0,\infty)$ be the function
    \begin{equation}\label{eq:OurMollifier}
        \varphi(x) = \begin{cases}
            c_1\exp\left(\frac{-1}{1-x^2}\right), & |x| < 1, \\
            0, &|x| \geq 1
        \end{cases},
    \end{equation}
    where $c_1 > 0$ is a normalization constant that ensures $\int_{\R}\varphi \, \mathrm{d}\lambda = 1$. Clearly $\supp \varphi = \overline{(-1,1)} = [-1,1]$ and
    $\supp \varphi_{\ep} = [-\ep,\ep]$. Moreover, with a change
    of variables it can be seen that $\int_{\R}\varphi_{\ep} \, \mathrm{d}\lambda = 1$, 
    and it can also be shown using the Lebesgue Dominated Convergence Theorem
    that as $\ep \to 0$ the third property in Definition \ref{def:MollifiersDefinition} holds. Figure 
    \ref{fig:OurMollifier} represents the function
    $\varphi_{\ep}$ for different values of $\ep > 0$. \hfill $\square$
\end{example}
For several results in this paper, we will require the following assumption.
\begin{assumption}\label{ass: mol}
The mollifier $\varphi$ is nonnegative and its support is the symmetric set around the origin $[-1,1]$.
\end{assumption}
Note that this assumption is made for the sake of convenience. If the set is not symmetric around the origin, many of the presented results will be \emph{displaced} but still apply. Also note that if the support of $\varphi$ is $[-a,a]$, with $0 < a \neq 1$, we can always rescale it to be $[-1,1]$ via the parameter $\ep$ as in Definition \ref{def:MollifiersDefinition}.

\begin{figure}
\centering
\includegraphics[width=\linewidth]{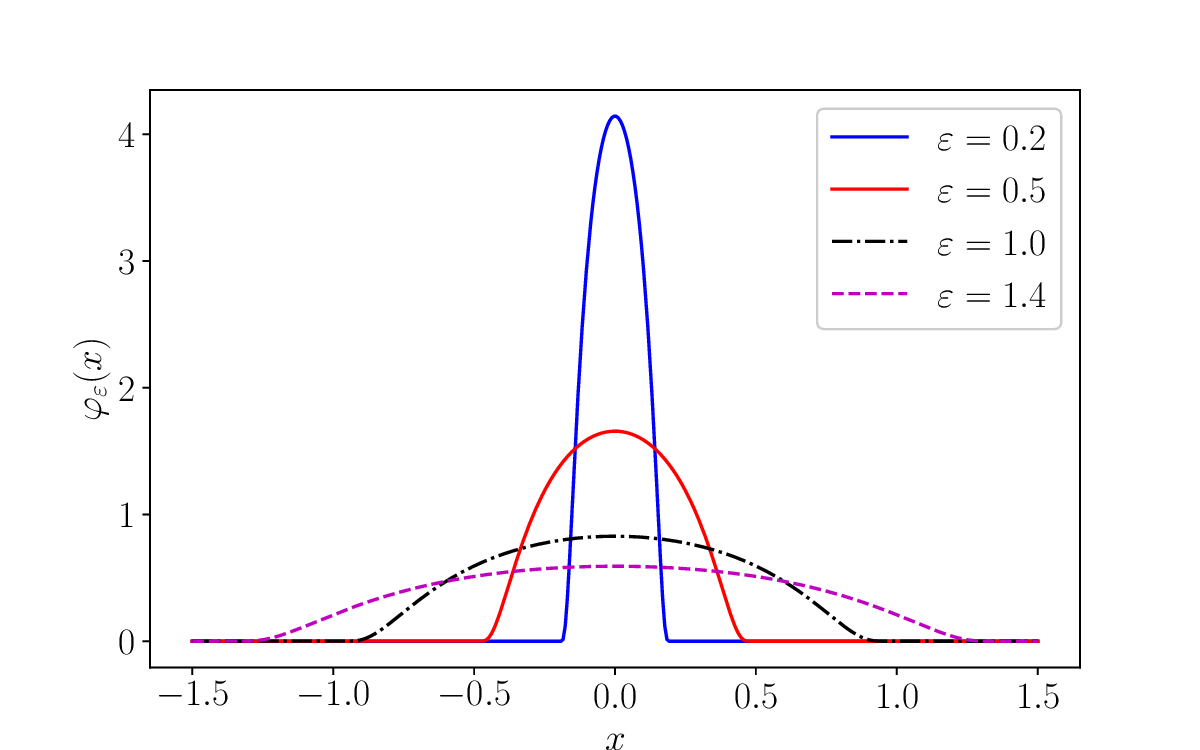}
\caption{Representation of $\varphi_{\ep}$ for different values of $\ep > 0$ where the mollifier $\varphi$ is defined in Example \ref{example:OurMollifier}.}
\label{fig:OurMollifier}
\end{figure}

Now, let us consider a set of mollifiers $\{\varphi_{\ep_i} : \R \to \R\}_{i=1}^{n}$, and define the generated path $F = (F_1,\dots,F_n)$ with $F_i := f_i * \varphi_{\ep_i}$ for all $i \in \{1,\dots,n\}$, and let us denote by $\varphi_{\ep}^{(n)}$ the $n$-th derivative of the function $\varphi_{\ep}$.
\begin{theorem}[{\cite[Appendix C, Theorem
7]{Evans2022-PDE}}]\label{thm:PropertiesOfMollifying}
Let $\ep > 0$, $f \in L_{loc}^p(\R)$ with $p \in [1,\infty]$, and $\varphi$ be a mollifier; then the following three statements hold:
\begin{enumerate}
\item $F_{\ep} \in C^{\infty}(\R,\R)$ and for any $n \in \N$ we have that $F_{\ep}^{(n)} = (\varphi_{\ep} * f)^{(n)} = (\varphi_{\ep})^{(n)} * f$.
\item $\varphi_{\ep} * f \to f$ pointwise almost everywhere as $\ep \to 0$.
\item If $p < \infty$ then $\lim_{\ep\to0}||\varphi_{\ep} * f - f||_{p} = 0$. \hfill $\square$
\end{enumerate}
\end{theorem}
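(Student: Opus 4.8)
The statement is the classical mollifier smoothing‑and‑approximation theorem, so the plan is to establish the three items by the three standard arguments. Throughout I write $F_{\ep}=\varphi_{\ep}*f$ and fix $R>0$ with $\supp\varphi\subseteq[-R,R]$, so that $\supp\varphi_{\ep}\subseteq[-R\ep,R\ep]$ and $\|\varphi_{\ep}\|_{\infty}=\ep^{-1}\|\varphi\|_{\infty}<\infty$ (by continuity and compact support of $\varphi$), while $\|\varphi_{\ep}\|_{1}=\|\varphi\|_{1}$ by the change of variables $z=\ep y$. For item~1 I would differentiate $F_{\ep}(x)=\int_{\R}\varphi_{\ep}(x-y)f(y)\,\mathrm{d}\lambda(y)$ under the integral sign: for fixed $x$ and $|h|\le 1$, the integrand of the difference quotient $h^{-1}(F_{\ep}(x+h)-F_{\ep}(x))$ is supported in one fixed compact set $K=K(x)$ on which $f\in L^{1}$ (since $L^{p}_{loc}\subseteq L^{1}_{loc}$), while the quotients $h^{-1}(\varphi_{\ep}(x+h-y)-\varphi_{\ep}(x-y))$ converge pointwise to $\varphi_{\ep}'(x-y)$ and are bounded by $\|\varphi_{\ep}'\|_{\infty}$ by the mean value theorem; the Lebesgue Dominated Convergence Theorem with dominating function $y\mapsto\|\varphi_{\ep}'\|_{\infty}|f(y)|\ind_{K}(y)$ then gives $F_{\ep}'=\varphi_{\ep}'*f$. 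Since every $\varphi_{\ep}^{(k)}$ is again smooth with compact support, induction yields $F_{\ep}^{(n)}=\varphi_{\ep}^{(n)}*f$ for all $n$, and each such convolution of a continuous compactly supported function against an $L^{1}_{loc}$ function is continuous, so $F_{\ep}\in C^{\infty}(\R,\R)$.

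For item~2, using $\int_{\R}\varphi_{\ep}\,\mathrm{d}\lambda=1$ from Definition \ref{def:MollifiersDefinition} I would write $(\varphi_{\ep}*f)(x)-f(x)=\int_{\R}\varphi_{\ep}(z)(f(x-z)-f(x))\,\mathrm{d}\lambda(z)$ and bound it, using $\|\varphi_{\ep}\|_{\infty}=\ep^{-1}\|\varphi\|_{\infty}$ and $\supp\varphi_{\ep}\subseteq[-R\ep,R\ep]$, by
\[
\big|(\varphi_{\ep}*f)(x)-f(x)\big|\ \le\ 2R\|\varphi\|_{\infty}\cdot\frac{1}{2R\ep}\int_{x-R\ep}^{x+R\ep}\big|f(y)-f(x)\big|\,\mathrm{d}\lambda(y).
\]
The averaged integral on the right tends to $0$ as $\ep\to0$ at every Lebesgue point of $f$ by the Lebesgue differentiation theorem, and $\lambda$-almost every $x$ is such a point; hence $\varphi_{\ep}*f\to f$ pointwise a.e.

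For item~3 I would combine Young's inequality $\|\varphi_{\ep}*g\|_{p}\le\|\varphi_{\ep}\|_{1}\|g\|_{p}=\|\varphi\|_{1}\|g\|_{p}$ (valid for all $g\in L^{p}(\R)$, uniformly in $\ep$) with density: for $g\in C_{c}(\R)$ (continuous, compactly supported) uniform continuity gives $\varphi_{\ep}*g\to g$ uniformly, with all supports in a common compact set, hence in $L^{p}$; then for general $f\in L^{p}(\R)$ and $\delta>0$ one picks $g\in C_{c}(\R)$ with $\|f-g\|_{p}<\delta$ and estimates
\[
\|\varphi_{\ep}*f-f\|_{p}\ \le\ \|\varphi_{\ep}*(f-g)\|_{p}+\|\varphi_{\ep}*g-g\|_{p}+\|g-f\|_{p}\ \le\ (\|\varphi\|_{1}+1)\delta+\|\varphi_{\ep}*g-g\|_{p},
\]
so $\limsup_{\ep\to0}\|\varphi_{\ep}*f-f\|_{p}\le(\|\varphi\|_{1}+1)\delta$ for every $\delta>0$ and the limit is $0$; if $f$ lies only in $L^{p}_{loc}$ the same argument runs on each compact interval after multiplying $f$ by a cutoff, since on a fixed compact set $\varphi_{\ep}*f$ sees $f$ only on a slightly larger one. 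The only genuinely nontrivial ingredient is item~2: because $f$ need not be continuous, the pointwise limit cannot be read off from Definition \ref{def:MollifiersDefinition}(3) and must be squeezed out of the Lebesgue differentiation theorem via the averaging estimate above, whereas items~1 and~3 are routine given dominated convergence and the density of $C_{c}(\R)$ in $L^{p}(\R)$, modulo the mild $L^{p}$‑versus‑$L^{p}_{loc}$ localization in item~3.
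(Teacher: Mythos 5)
The paper does not prove this theorem at all: it is imported verbatim as a citation to Evans (Appendix C, Theorem 7), so there is no in-paper argument to compare against. Your proof is correct and is essentially the standard one found in the cited reference: dominated convergence for differentiation under the integral sign (item 1), the Lebesgue differentiation theorem at Lebesgue points (item 2), and Young's inequality plus density of $C_c(\R)$ in $L^p(\R)$ (item 3). You also correctly identified the one genuinely delicate point in the statement as written: for $f \in L^p_{loc}(\R)$ the global norm $\|\varphi_{\ep}*f - f\|_p$ need not be finite, and the convergence in Evans is in $L^p_{loc}$ (i.e., on compact sets); your cutoff/localization remark is exactly the right repair, and is worth keeping since the paper's restatement silently drops the ``loc'' from the norm in item 3. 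Two cosmetic observations: your bound in item 2 uses only $|\varphi_{\ep}| \le \|\varphi_{\ep}\|_{\infty}$ and your item 3 carries $\|\varphi\|_1$ rather than $1$, so both arguments remain valid even though the paper's Definition \ref{def:MollifiersDefinition} does not require $\varphi \ge 0$ (nonnegativity is only imposed later in Assumption \ref{ass: mol}); this care is appropriate here.
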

While it may seem otherwise, Theorem \ref{thm:PropertiesOfMollifying} does not solve Problem \ref{prob:RegularizationProblem} \emph{entirely}. Indeed, the second requirement is satisfied straightforwardly. Regarding the computational efficiency from the third requirement, the computation of $F$ is also straightforward. Indeed, because of the compact and symmetric support of the mollifier due to Assumption \ref{ass: mol}, we have that $F_{\ep}(x) = \int_{[-\ep,\ep]}f(x-t)\varphi_{\ep}(t) \, \mathrm{d}\lambda(t)$ is an inexpensive numerical operation, and note the compact integration interval. Furthermore, note that $F_i^{(n)} := (f_i * \varphi_{\ep_i})^{(n)} = f_i * \varphi_{\ep_i}^{(n)}$, and also note that computing $F_i^{(n)}(x)$ does not require the existence of $f_i^{(n)}(x)$. 

However, regarding the first requirement of Problem \ref{prob:RegularizationProblem}, Theorem \ref{thm:PropertiesOfMollifying} only gives us pointwise convergence and $L^p$ convergence. For path following or trajectory tracking algorithms we want to have a stronger notion of convergence for Problem \ref{prob:RegularizationProblem}, i.e., if $f \in L^p_{loc}(\R)$ then for any mollifier $\varphi$, we want $\varphi_{\ep} * f \to f$ as $\ep \to 0$ uniformly. This is true if we require $f$ to be uniformly continuous; nonetheless, if $f$ is just continuous we have uniform convergence on compact subsets of $\R$. Let us finish by showing why mollifying a uniformly continuous $f$ solves the Path Generation Problem \ref{prob:RegularizationProblem} with a stronger notion of convergence.

\begin{theorem}[Uniform convergence]\label{thm:UniformConvergence}
    Let $f \in L^p_{loc}(\R)$ and let $\varphi : \R \to \R$ be a non negative
    mollifier. The following statements hold:
    \begin{enumerate}
        \item If $f$ is uniformly continuous then $F_{\ep} \to f$ as $\ep
        \to 0$ uniformly.
        \item If $f$ is Lipschitz continuous then $F_{\ep} \to f$ as $\ep \to 0$ uniformly and $F_{\ep}$ is Lipschitz continuous for any $\ep > 0$.
        \item If $f$ is continuous then $F_{\ep} \to f$ as $\ep \to 0$ uniformly on compact subsets of $\R$.
    \end{enumerate}
\end{theorem}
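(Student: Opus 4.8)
The plan is to reduce all three statements to one elementary pointwise estimate that exploits only the properties of $\varphi_\ep$ actually needed here: $\int_\R \varphi_\ep \, \mathrm{d}\lambda = 1$, $\supp \varphi_\ep \subseteq [-\ep,\ep]$, and nonnegativity. First I would note that by Theorem \ref{thm:PropertiesOfMollifying} the convolution $F_\ep = \varphi_\ep * f$ is well defined and smooth, and that in items (1)--(3) a (uniformly) continuous $f$ is automatically locally bounded, hence in $L^p_{\mathrm{loc}}(\R)$, so that hypothesis is not an extra restriction. Using commutativity of the convolution and $\int_\R \varphi_\ep \, \mathrm{d}\lambda = 1$, I write
\begin{equation*}
F_\ep(x) - f(x) = \int_{[-\ep,\ep]} \varphi_\ep(y)\bigl(f(x-y) - f(x)\bigr)\, \mathrm{d}\lambda(y),
\end{equation*}
and then, since $\varphi_\ep \geq 0$,
\begin{equation*}
|F_\ep(x) - f(x)| \;\leq\; \sup_{|y|\leq \ep} |f(x-y) - f(x)|.
\end{equation*}

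For item (1), introducing the modulus of continuity $\omega_f(\delta) := \sup\{|f(s)-f(s')| : |s-s'|\leq \delta\}$, the displayed bound gives $\sup_{x\in\R}|F_\ep(x)-f(x)| \leq \omega_f(\ep)$, and $\omega_f(\ep)\to 0$ as $\ep\to 0$ is exactly the statement that $f$ is uniformly continuous; this yields uniform convergence. For item (2), Lipschitz continuity implies uniform continuity, so uniform convergence is a special case of (1); for the Lipschitz property of $F_\ep$ I would apply the same averaging idea to an increment, using nonnegativity of $\varphi_\ep$ and $\int \varphi_\ep \, \mathrm{d}\lambda = 1$,
\begin{equation*}
|F_\ep(x) - F_\ep(x')| \leq \int_\R \varphi_\ep(y)\,|f(x-y) - f(x'-y)|\, \mathrm{d}\lambda(y) \leq L\,|x-x'|,
\end{equation*}
where $L$ is a Lipschitz constant of $f$; note the constant is preserved.

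For item (3) the only genuine wrinkle is that $F_\ep(x)$ depends on $f$ over the $\ep$-neighborhood of $x$, so uniform continuity of $f$ on a compact set $K$ alone is not literally enough. I would handle this by enlarging: set $K_1 := \{x\in\R : \dist(x,K)\leq 1\}$, which is compact, so $f$ is uniformly continuous on $K_1$; for $\ep < 1$ and $x\in K$, every $x-y$ with $|y|\leq\ep$ lies in $K_1$, and the estimate above gives $\sup_{x\in K}|F_\ep(x)-f(x)| \leq \omega_{f|_{K_1}}(\ep) \to 0$. I expect this enlargement of the compact set to be the only step requiring attention; everything else is the standard mollifier estimate, carried out once and reused.
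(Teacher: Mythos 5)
Your proof is correct and follows essentially the same route as the paper's: the single estimate $|F_\ep(x)-f(x)| \leq \int_{[-\ep,\ep]}\varphi_\ep(y)\,|f(x-y)-f(x)|\,\mathrm{d}\lambda(y)$ combined with nonnegativity and unit mass of the mollifier, with the Lipschitz claim obtained by the same averaging applied to an increment. The one point where you are more careful than the paper is item (3): you explicitly enlarge the compact set $K$ to a neighborhood $K_1$ so that $x-y$ stays where uniform continuity is available, whereas the paper's proof invokes uniform continuity on $K$ itself and leaves that (necessary) enlargement implicit.
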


\begin{proof}

For the first statement, suppose $f$ is uniformly continuous. Fix $x \in \R$ and $\eta > 0$. We know there exists a $\delta = \delta(\eta) > 0$ such that $|f(a)-f(b)| < \eta$ whenever $|a-b| < \delta$. Choose $\ep \in (0,\delta)$. Recall that $\int_{\R} \varphi = 1$, thus $f(x) = \int_{\R}f(x)\varphi(t) \, \mathrm{d}\lambda(t)$. Therefore
\begin{align*}
|F_{\ep}(x) -f(x)| &= \left|\int_{(-\ep,\ep)}(f(x-t)-f(x))\varphi_{\ep}(t) \, \mathrm{d}\lambda(t)\right| \\
&\leq \int_{(-\ep,\ep)}|f(x-t)-f(x)|\varphi_{\ep}(t) \, \mathrm{d}\lambda(t) < \eta, 
\end{align*}
because $|x-t-x| = |t| \leq \ep < \delta$. Since $x$ was arbitrary the statement follows.

For the second statement, since every Lipschitz continuous functions is uniformly continuous, the uniform convergence claim follows from the previous  paragraph. Moreover, take $x, y \in \R$, $\ep \in (0,\infty)$ and suppose that $f$ is Lipschitz with Lipschitz constant $K > 0$.
\begin{align*}
|F_{\ep}(x) - F_{\ep}(y)| &\leq \int_{\R}|f(x-t)-f(y-t)|\varphi_{\ep}(t)\, \mathrm{d}\lambda(t) \\
&\leq K|x-y|\int_{\R}\varphi_{\ep}(t) \, \mathrm{d}\lambda(t) = K|x-y|,
\end{align*}        
which proves that $F_{\ep}$ is Lipschitz.

Finally, for the third statement, suppose now $f$ is continuous and take any compact set $K \subset \R$.  Then $f$ is uniformly continuous on $K$. Thus, the same arguments as above can be followed noting that in this case $\delta$ depends on the compact set $K$.
\end{proof}

\section{Key properties of the generated path}
\label{sec: res}

While Theorems \ref{thm:PropertiesOfMollifying} and \ref{thm:UniformConvergence} solve Problem \ref{prob:RegularizationProblem}, they do not provide further details about the properties of the resulting path. In this section, we characterize key properties of the generated path based on the input path. Specifically, we address questions such as: under what conditions does the output path preserve (local/quasi) convexity of the input? Does the output maintain monotonicity or other qualitative properties of the input, such as for step functions? How is the output path positioned relative to the input? How is the output path enclosed, and what is its length when the input is a closed path? We defer the analysis of the output path's curvature to Section \ref{sec: curvature}, where we provide a detailed curvature analysis for the case when the input is a sequence of 2D or 3D waypoints. This curvature analysis serves as a systematic methodology that can be applied to other types of input paths.

For conciseness, we restrict our attention to mollifiers defined on the real line, since our analysis is carried out component-wise along trajectories. Nevertheless, most results can be generalized to functions from $\R^n$ to $\R$, which may be useful when the desired path in $\R^n$ is encoded as the intersection of $n-1$ hypersurfaces parametrized by functions from $\R^n$ to $\R$. In such higher-dimensional cases, one would consider the standard Euclidean norm in $\R^n$, the topology whose basis consists of open balls $B(x,\ep) = \{y \in \R^n \mid ||y-x|| <\ep\}$, and a mollifier whose support is the closed ball $\overline{B}(0,1) = \{x \in \R^n \mid ||x|| \leq 1\}$. Indeed, this is precisely the case for the extension to $\R^n$ of the mollifier presented in Example \ref{example:OurMollifier}.

\subsection{Convexity properties}
Convexity and local convexity are properties of great interest in the study of trajectory shapes. For example, if a vehicle attempts to follow a continuous but non-differentiable trajectory that resembles an inverted tent, such as the function $x \in \R \mapsto |x|$, will the mollified trajectory preserve this inverted tent shape? What if the property holds only locally? Intuitively, since mollification is a weighted average of the original function, the answer to the first question is affirmative. However, the answer to the second question depends on the parameter value $\varepsilon$. If the parameter is sufficiently large, the ``average'' of the function over the locally convex region may become negligible. We now present several propositions and counterexamples addressing these questions.

\begin{proposition}[Convexity and mollification]\label{prop:ConvexityIsPreserved}
    Let $f \in L^1_{loc}(\R)$ be convex and $\varphi$ a nonnegative
    mollifier as in Definition \ref{def:MollifiersDefinition}; then $F_{\ep} :=
    \varphi_{\ep} * f$ is convex for any $\ep > 0$.
\end{proposition}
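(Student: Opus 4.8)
The plan is to verify the defining convexity inequality directly on the convolution integral, exploiting the affineness of the translation argument and the sign condition on the mollifier.

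First I would record a standard fact: a real-valued convex function on all of $\R$ is automatically continuous (indeed locally Lipschitz on the interior of its domain), hence locally bounded and Borel measurable. Since by Assumption~\ref{ass: mol} the rescaled mollifier $\varphi_\ep$ is nonnegative with compact support $[-\ep,\ep]$, this guarantees that
\[
F_\ep(x) = (\varphi_\ep * f)(x) = \int_{[-\ep,\ep]} f(x-t)\,\varphi_\ep(t)\,\mathrm{d}\lambda(t)
\]
is a finite, well-defined real number for every $x\in\R$; here I use the commutativity of the convolution from Definition~\ref{def:ConvolutionDefinition} together with the substitution $t\mapsto x-t$ to put the integral in this translated form. (If one only uses $f\in L^1_{loc}$ without continuity, the convolution is defined almost everywhere and one gets convexity a.e.; continuity upgrades this to an everywhere statement, which is what we want.)

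Next I would fix $x,y\in\R$ and $\theta\in[0,1]$ and use the elementary identity $\theta x+(1-\theta)y-t=\theta(x-t)+(1-\theta)(y-t)$, valid for every $t$. Substituting into the integral representation of $F_\ep\big(\theta x+(1-\theta)y\big)$ and applying the pointwise convexity inequality of $f$ inside the integrand,
\[
f\big(\theta(x-t)+(1-\theta)(y-t)\big)\;\le\;\theta f(x-t)+(1-\theta)f(y-t)\qquad\text{for all }t\in[-\ep,\ep].
\]
Since $\varphi_\ep(t)\ge 0$, multiplying through by $\varphi_\ep(t)$ preserves this inequality, and monotonicity together with linearity of the Lebesgue integral gives
\[
F_\ep\big(\theta x+(1-\theta)y\big)\;\le\;\theta\!\int f(x-t)\varphi_\ep(t)\,\mathrm{d}\lambda(t)+(1-\theta)\!\int f(y-t)\varphi_\ep(t)\,\mathrm{d}\lambda(t)\;=\;\theta F_\ep(x)+(1-\theta)F_\ep(y),
\]
which is precisely the convexity of $F_\ep$ for the fixed $\ep>0$.

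The only delicate points are bookkeeping: ensuring finiteness of the integrals (handled by continuity of $f$ and compactness of $\supp\varphi_\ep$), checking measurability of $t\mapsto f\big(\theta(x-t)+(1-\theta)(y-t)\big)$ (immediate from continuity of $f$), and—most importantly—the step where the pointwise inequality is integrated, which requires $\varphi_\ep\ge 0$. I expect that sign-condition step to be the one worth flagging, since it is exactly where the nonnegativity hypothesis on $\varphi$ (Assumption~\ref{ass: mol}) enters; without it the weighted average need not respect the convex inequality. No compactness or convergence subtleties arise here because $\ep>0$ is held fixed throughout.
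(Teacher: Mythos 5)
Your proof is correct and follows essentially the same route as the paper's: write $F_{\ep}(\gamma x+(1-\gamma)y)$ as an integral, use the identity $\gamma x+(1-\gamma)y-t=\gamma(x-t)+(1-\gamma)(y-t)$, apply the pointwise convexity of $f$, and integrate against the nonnegative weight $\varphi_{\ep}$. The extra bookkeeping you supply (continuity of convex functions on $\R$, finiteness of the integrals over the compact support) is a welcome addition that the paper leaves implicit, but it does not change the argument.
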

\begin{proof}
    Let $x,y \in \R$ and $\gamma \in [0,1]$. Since $\varphi_{\ep} \geq 0$
    we have that
    \begin{align*}
        F_{\ep}(\gamma x + &(1-\gamma)y) = \int_{\R}f(\gamma x +
        (1-\gamma)y - t)\varphi_{\ep}(t) \, \mathrm{d}t \\
        &= \int_{\R}f(\gamma(x-t) + (1-\gamma)(y-t))\varphi_{\ep}(t) \, \mathrm{d}t \\
        &\leq \int_{\R}\left[\gamma f(x-t) +
        (1-\gamma)f(y-t)\right]\varphi_{\ep}(t) \, \mathrm{d}t \\
        &=\gamma F_{\ep}(x) + (1-\gamma)F_{\ep}(y).
    \end{align*}
\end{proof}

This property allows us to predict the shape of the mollified trajectory in advance. For example, if the trajectory to be followed resembles an inverted tent, the mollified trajectory $F_{\ep}$ will also resemble an inverted tent for any $\ep>0$. The question is whether local convexity is always preserved. This is false; local convexity is preserved only for sufficiently small $\ep>0$, where the bound on $\ep$ depends on the neighborhood in which the function is convex. We now present a proposition and a counterexample.

\begin{proposition}[Local convexity and mollification]\label{prop:LocalConvexity}
    Let $f \in L^{1}_{loc}(\R)$ be a function that is convex in some set $(a,b)
    \subseteq \R$ with $-\infty \leq a < b \leq \infty$. Let $\varphi$ satisfy Assumption \ref{ass: mol}. Then, for each $x,y \in (a,b)$
    with $x < y$ there exists a $\delta = \delta(x,y)> 0$ such that for all $\ep
    \in(0,\delta)$ the function $F_{\ep} := f * \varphi_{\ep}$ is
    convex in the set $(x,y)$.
\end{proposition}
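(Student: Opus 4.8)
The plan is to localize and then rerun the averaging argument of Proposition~\ref{prop:ConvexityIsPreserved}. The point is that $F_{\ep}(z)=\int_{[-\ep,\ep]} f(z-t)\varphi_{\ep}(t)\,\mathrm{d}\lambda(t)$ depends on $f$ only through its values on the window $[z-\ep,z+\ep]$ (here I use that $\supp\varphi_{\ep}=[-\ep,\ep]$ by Assumption~\ref{ass: mol}), so it suffices to choose $\ep$ small enough that, for every $z\in(x,y)$, this whole window sits inside the region $(a,b)$ where $f$ is convex. Accordingly I would set
\[
\delta := \delta(x,y) = \min\{\, x-a,\ b-y \,\},
\]
with the conventions $x-a=+\infty$ if $a=-\infty$ and $b-y=+\infty$ if $b=\infty$; since $a<x<y<b$ this $\delta$ is strictly positive. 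Fix any $\ep\in(0,\delta)$.

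The first step is the localization check: for $z\in(x,y)$ and $t\in[-\ep,\ep]$ one has $z-t>x-\ep>x-(x-a)=a$ and $z-t<y+\ep<y+(b-y)=b$, hence $[z-\ep,z+\ep]\subseteq(a,b)$ and only such $t$ contribute to the integral defining $F_{\ep}(z)$.

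The second step is the convexity estimate. Fix $z,w\in(x,y)$ and $\gamma\in[0,1]$, and for each $t\in[-\ep,\ep]$ write the argument of $f$ as $\gamma(z-t)+(1-\gamma)(w-t)$. By the first step, $z-t$, $w-t$, and their convex combination all lie in $(a,b)$, so convexity of $f$ on $(a,b)$ gives $f\big(\gamma(z-t)+(1-\gamma)(w-t)\big)\le \gamma f(z-t)+(1-\gamma)f(w-t)$. Multiplying by $\varphi_{\ep}(t)\ge 0$ and integrating over $[-\ep,\ep]$ (all integrals finite since $f\in L^1_{loc}(\R)$, and $f$ is in fact continuous on $(a,b)$) yields $F_{\ep}(\gamma z+(1-\gamma)w)\le \gamma F_{\ep}(z)+(1-\gamma)F_{\ep}(w)$, i.e. $F_{\ep}$ is convex on $(x,y)$, exactly as in the proof of Proposition~\ref{prop:ConvexityIsPreserved}.

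I do not expect a genuine obstacle: the only things needing care are the bookkeeping of the window $[z-\ep,z+\ep]$ relative to $(a,b)$ — in particular the degenerate cases $a=-\infty$ and/or $b=\infty$, covered by the conventions above — and the observation that $\delta$ cannot be made uniform in $(x,y)$ as $x\downarrow a$ or $y\uparrow b$, which is precisely why the result is only local and why the announced counterexample is needed to show $\ep$ cannot be taken arbitrarily large.
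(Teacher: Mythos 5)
Your proposal is correct and follows essentially the same route as the paper's proof: both choose $\delta$ so that the window $(x-\delta,y+\delta)$ sits inside $(a,b)$, observe that for $\ep<\delta$ all arguments $z-t$ with $z\in(x,y)$, $t\in(-\ep,\ep)$ remain in the convexity region, and then rerun the averaging argument of Proposition~\ref{prop:ConvexityIsPreserved}. The only cosmetic difference is that you give the explicit value $\delta=\min\{x-a,\,b-y\}$ where the paper merely asserts existence of such a $\delta$ via openness of $(a,b)$.
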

\begin{proof}
    Let $x,y \in (a,b)$.
    Since $(a,b)$ is open there exists a real number $\delta > 0$ such that
    $(x-\delta,y+\delta) \subset (a,b)$. 
    The sets $V = (x-\delta,y+\delta)\subset (a,b)$ and $(x,y)\subset(a,b)$ are
    clearly open and convex. Choose $\xi,\zeta \in (x,y)$ and $\gamma \in
    [0,1]$. Then we have that $\gamma \xi + (1-\gamma)\zeta \in (x,y)$. Let $\ep \in (0,\delta)$, and we know that
    \begin{align*}
        F_{\ep}(\gamma &\xi + (1-\gamma)\zeta) = \int_{\R}f(\gamma \xi +
        (1-\gamma)\zeta -t)\varphi_{\ep}(t) \, \mathrm{d}\lambda(t).
    \end{align*}
    Note that by the selection of $\ep$ and $\delta$ we have that $t \in
    (-\ep,\ep) \subset (-\delta, \delta)$. Thus, for any $t \in
    (-\ep,\ep)$,
    $\xi-t \in
    V$ and $\zeta-t \in V$. Since $V$
    is convex we have that $\gamma (\xi-t) + (1-\gamma)(\zeta-t) \in V$ for all
    $t \in (-\ep,\ep)$ and $\gamma \in [0,1]$. Given that $f$ is convex
    in $(a,b)$ it is also convex in $V\subset (a,b)$, and noting
    $\varphi_{\ep} \geq 0$ we can follow the steps of the proof of proposition
    \ref{prop:ConvexityIsPreserved} to reach
    \begin{align*}
        F_{\ep}(\gamma \xi + (1-\gamma)\zeta)
        &\leq\gamma F_{\ep}(\xi) + (1-\gamma)F_{\ep}(\zeta).
    \end{align*}
    Because $\xi,\zeta \in V$ and $\gamma \in [0,1]$ were arbitrary the
    proposition follows.
\end{proof}

\begin{example}\label{example: LocalConvexityNotPreserved}
It is natural to ask whether Proposition \ref{prop:LocalConvexity} holds for any $\ep > 0$. That is, if $f$ is convex on an open set, is $F_{\ep}$ also convex on that open set independently of $\ep$ and the choice of non-negative mollifier? This is generally false, as demonstrated by the following counterexample. Consider the continuous function
\begin{equation}
f(x) = \begin{cases}
        0, & x < 0 \\
        x, & 0 \leq x \leq \frac{1}{2} \\
        1-x, & x > \frac{1}{2}
    \end{cases},
    \label{ex: 2}
\end{equation}
and the mollifier of Example \ref{example:OurMollifier}. The function \eqref{ex: 2} is convex on the open set $(-0.5, 0.5)$. However, for $\ep = 3.2$, the mollified function is not convex and even lies below $f$ at every point in that open set, rather than above, as shown in Figure \ref{fig:CounterExampleLocalConvexity}.
    \begin{figure}[!htb]
        \centering
        \includegraphics[width=\linewidth]{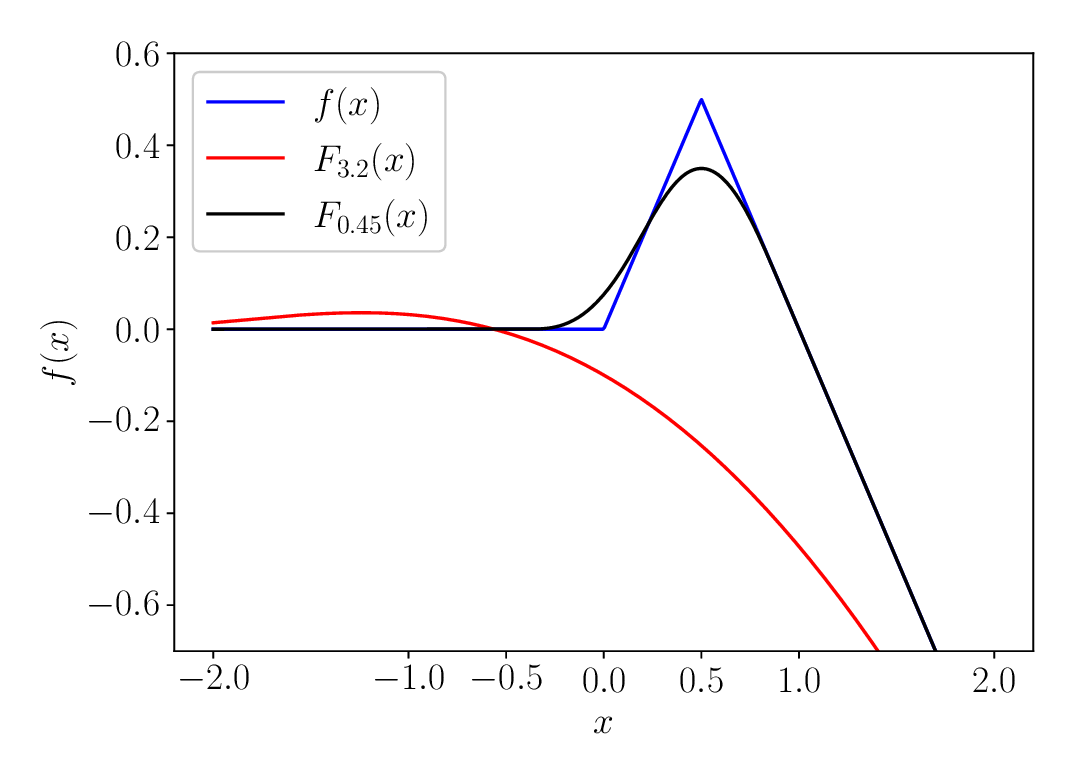}
        \caption{If $f$ is convex in an open neighbourhood there can
            exist a mollifier  $\varphi$ as in example
            \ref{example:OurMollifier} and an $\ep > 0$ such that $F_{\ep} := f
            * \varphi$ is below $f$ in that neighbourhood. Blue line represents
            $f$, while the red one shows $F_{3.2}$ and the black one
            $F_{0.45}$.}
        \label{fig:CounterExampleLocalConvexity}
    \end{figure}
Nevertheless, for $\ep = 0.45$, we have that $F_{\ep}$ is convex on a neighborhood contained in $(-0.5,0.5)$. Thus, we can set $\delta = 0.45$ in Proposition \ref{prop:LocalConvexity}, and for any $\ep < \delta$, $F_{\ep}$ is convex in $(-0.5,0.5)$.
\end{example}
\begin{remark}
Note that Propositions \ref{prop:ConvexityIsPreserved} and \ref{prop:LocalConvexity} also hold when $f$ is concave instead of convex, with concavity replacing convexity throughout.
\end{remark}

Before the following result, we need to prove that affine maps are invariant under mollification.
\begin{proposition}[Affine functions and
    mollification]\label{prop:MollifyIdentityIsIdentity}
    Let $\varphi_{\ep}$ with $\ep > 0$ be a mollifier with symmetric support around the origin, and let $a,b \in \R$; then $\varphi_{\ep} * (a\id+b) =
    a\id + b$.
\end{proposition}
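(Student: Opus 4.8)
The plan is to establish $(\varphi_{\ep} * (a\,\id+b))(x) = ax + b$ for every $x \in \R$ by a direct computation, and then conclude that the two functions coincide. First I would note that, although $a\,\id+b \notin L^1(\R)$ so Definition \ref{def:ConvolutionDefinition} does not apply verbatim, the convolution is still well defined pointwise because $\supp\varphi_\ep = [-\ep,\ep]$ is compact: the relevant object is $(\varphi_\ep * (a\,\id+b))(x) = \int_{[-\ep,\ep]} \big(a(x-t)+b\big)\,\varphi_\ep(t)\,\mathrm{d}\lambda(t)$, a finite integral of a continuous function over a compact set, exactly as discussed after Theorem \ref{thm:PropertiesOfMollifying}.

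Next I would expand the integrand by linearity in $t$, splitting the integral as $(ax+b)\int_{\R}\varphi_\ep(t)\,\mathrm{d}\lambda(t) - a\int_{\R} t\,\varphi_\ep(t)\,\mathrm{d}\lambda(t)$. The first (zeroth–moment) integral equals $1$: this is property (2) of Definition \ref{def:MollifiersDefinition} for $\varphi$, transported to $\varphi_\ep$ by the change of variables $s = t/\ep$ exactly as in Example \ref{example:OurMollifier}. Hence the first term is already $ax+b$, and the whole statement reduces to showing that the first moment $\int_{\R} t\,\varphi_\ep(t)\,\mathrm{d}\lambda(t)$ vanishes.

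This first-moment computation is the only real step. Since $\varphi$ is symmetric about the origin, $\varphi_\ep(t) = \tfrac{1}{\ep}\varphi(t/\ep)$ is an even function, so $t \mapsto t\,\varphi_\ep(t)$ is odd; its integral over the symmetric set $[-\ep,\ep]$ (equivalently over $\R$) is $0$. Rigorously, this is the substitution $t \mapsto -t$, which preserves $\lambda$ and flips the sign of the integrand, forcing the integral to equal its own negative. Combining the two terms gives $(\varphi_\ep * (a\,\id+b))(x) = ax+b = (a\,\id+b)(x)$, and since $x$ was arbitrary the functions are equal.

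The main obstacle is conceptual rather than technical: one must use ``symmetric support'' in the strong sense that $\varphi$ is itself an \emph{even} function, since merely having $\supp\varphi = [-1,1]$ does not force the first moment to vanish (a skewed nonnegative bump on $[-1,1]$ would translate the identity). The evenness required does hold for the mollifier of Example \ref{example:OurMollifier}, so the statement is true under the working hypotheses. An essentially equivalent alternative is to invoke Theorem \ref{thm:PropertiesOfMollifying}(1): $F_\ep' = \varphi_\ep * a = a$, so $F_\ep$ is affine with slope $a$, and then evaluate $F_\ep(0) = \int_{\R}(-at+b)\varphi_\ep(t)\,\mathrm{d}\lambda(t) = b$ to pin down the intercept — but this again rests on the vanishing of the first moment.
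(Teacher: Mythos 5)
Your proof is correct and follows essentially the same route as the paper's: expand the convolution by linearity, use that the zeroth moment of $\varphi_{\ep}$ equals $1$, and kill the first moment because $t\mapsto t\,\varphi_{\ep}(t)$ is odd and integrated over the symmetric interval $[-\ep,\ep]$. Your added remark that this last step really requires $\varphi$ to be \emph{even}, not merely to have symmetric support, is a fair sharpening of a hypothesis the paper's own proof glosses over (it holds for the mollifier of Example \ref{example:OurMollifier}, so nothing downstream breaks).
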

\begin{proof}
    Let $x \in \R$ and $\ep > 0$. Then
    \begin{align*}
        (\varphi_{\ep} * (a\id+b))(x)  &=
        \int_{\R}(a(x-y)+b)\varphi_{\ep}(y) \mathrm{d} \lambda(y)  \\
        &= ax\int_{\R}\varphi_{\ep}(y) \, \mathrm{d}\lambda(y) -
        a\int_{\R}y\varphi_{\ep}(y) \, \mathrm{d}\lambda(y) \\
        & +
        b\int_{\R}\varphi_{\ep}(y)\, \mathrm{d}\lambda(y) \\
        &=ax + b - a\int_{(-\ep,\ep)}y\varphi_{\ep}(y)\, \mathrm{d}\lambda(y).
    \end{align*}
    The result follows noting that $y\in\R\mapsto y \varphi_{\ep}(y)$ is an odd
    function that is integrated over a symmetric interval.
\end{proof}
We also need the Jensen's inequality.
\begin{theorem}[Jensen's inequality]\label{thm:Jensen}
\cite[Theorem 1.6.2]{durrett2019probability}
    Let $\varphi$ be a non-negative measurable function such that
    $\int_{\R}\varphi \mathrm{d}\lambda = 1$, $g$ be any measurable function and $f$ be a convex function such that $\dom f \supset \img g$; then
    \begin{equation*}
        f\left(\int_{\R}g(x)\varphi(x)\mathrm{d}\lambda(x)\right) \leq \int_{\R} (f \circ
        g)(x)\varphi(x)\mathrm{d}\lambda(x).
    \end{equation*}
    \hfill $\square$
\end{theorem}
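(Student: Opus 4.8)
The plan is to prove this by the classical supporting-line argument for convex functions. First I would observe that, since $\varphi \ge 0$ and $\int_{\R}\varphi\,\mathrm{d}\lambda = 1$, the set function $A \mapsto \int_{A}\varphi\,\mathrm{d}\lambda$ is a probability measure on $(\R,\mathcal{B}(\R))$; hence both integrals in the statement are integrals of $g$ and of $f\circ g$ against a probability measure, and the claim is precisely the probabilistic Jensen inequality. Write $m := \int_{\R} g(x)\varphi(x)\,\mathrm{d}\lambda(x)$, which I take to be finite (i.e.\ $g$ integrable against $\varphi\,\mathrm{d}\lambda$); if instead the right-hand side equals $+\infty$ the inequality is vacuous, so this entails no loss for our purposes.

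Next I would use convexity of $f$ to produce a supporting line at $m$. Since $m \in \img g \subseteq \dom f$ and $\dom f$ is an interval, monotonicity of the difference quotients of a convex function gives that the left and right derivatives of $f$ at $m$ exist, and for any slope $c$ lying between them one has
\[
f(y) \ge f(m) + c\,(y-m) \qquad \text{for all } y \in \dom f .
\]
In the applications of this paper $f$ is convex on all of $\R$, so $m$ is automatically an interior point and such a $c$ always exists; if $m$ were a boundary point of $\dom f$ one would instead pass to the appropriate one-sided derivative, but that case does not arise here.

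Finally I would substitute $y = g(x)$ into the supporting-line inequality — legitimate because $\img g \subseteq \dom f$ — and integrate both sides against $\varphi\,\mathrm{d}\lambda$, using linearity of the Lebesgue integral together with $\int_{\R}\varphi\,\mathrm{d}\lambda = 1$:
\begin{align*}
\int_{\R} (f\circ g)(x)\,\varphi(x)\,\mathrm{d}\lambda(x)
&\ge \int_{\R} \big[\,f(m) + c\,(g(x)-m)\,\big]\varphi(x)\,\mathrm{d}\lambda(x) \\
&= f(m)\int_{\R}\varphi(x)\,\mathrm{d}\lambda(x) + c\Big(\int_{\R} g(x)\varphi(x)\,\mathrm{d}\lambda(x) - m\Big) \\
&= f(m) = f\Big(\int_{\R} g(x)\,\varphi(x)\,\mathrm{d}\lambda(x)\Big),
\end{align*}
which is the asserted inequality.

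I expect the only genuine subtleties to be measure-theoretic bookkeeping rather than any hard idea: one must check that $f\circ g$ is measurable so that the right-hand integral is well defined (immediate, since a convex $f$ on an interval is continuous, hence Borel, and a composition of Borel maps is Borel), and that $m$ is finite and interior to $\dom f$ so that a subgradient exists there. In the mollification setting — where $\varphi$ has compact support and the relevant $g$ (e.g.\ $\id$ or a path component) is continuous, hence bounded on that support, with $\dom f = \R$ — both points hold trivially, so the argument collapses to the two-line supporting-line computation above.
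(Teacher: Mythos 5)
The paper does not prove this statement at all: it is quoted verbatim as Theorem 1.6.2 of Durrett's textbook and used as a black box, so there is no in-paper proof to compare against. Your supporting-line (subgradient) argument is the standard proof — indeed it is essentially the one in the cited reference — and it is correct modulo the integrability caveats you already flag. One small repair: your justification that $m \in \dom f$ via ``$m \in \img g$'' is wrong in general (the mean of $g$ need not be attained by $g$); the correct reason is that $\dom f$ is an interval containing $\img g$, hence contains the convex hull of the essential range of $g$, which contains $m$. With that fix the argument goes through as written.
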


\begin{figure}
    \centering
    \includegraphics[width=\linewidth]{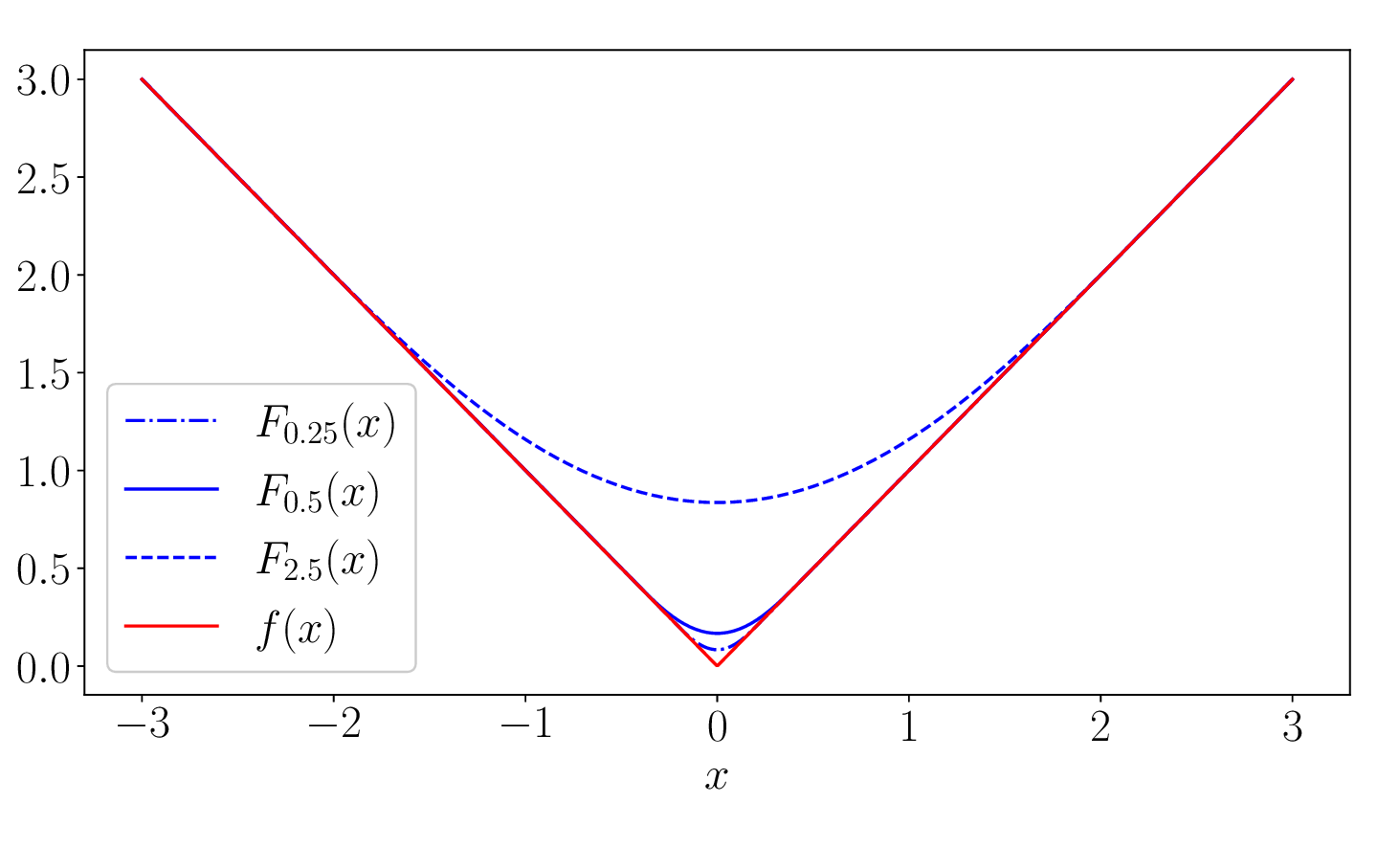}
    \caption{Mollification of the function $f = |\id|$ with the mollifier
        $\varphi_{\ep}$ as in Example \ref{example:OurMollifier} for different
        values of $\ep$. Note that the function $f$ is convex, implying
        $F_{\ep}$ is convex as demonstrated in proposition
        \ref{prop:ConvexityIsPreserved} and it is above the graph as shown in
        Proposition \ref{prop:ConvexityThenMolliAbove}.  }
    \label{fig:MollifOfAbs}
\end{figure}

Now, we are ready to show that if $f$ is convex and $\varphi_{\ep}$ is a non-negative even mollifier, then $\varphi_{\ep} * f \geq f$ pointwise. For example, combined with Proposition \ref{prop:MollifyIdentityIsIdentity}, the following result implies that if our trajectory resembles an inverted tent, then the mollified function will also resemble a smoothed inverted tent but with the mollified function lying \emph{above} the original path; see Figure \ref{fig:MollifOfAbs} for the illustration. 

\begin{proposition}[If $f$ is convex then $F_{\ep}$ is above
    $f$]\label{prop:ConvexityThenMolliAbove}
    Let $\varphi_{\ep}$ with $\ep > 0$ satisfying Assumption \ref{ass: mol}, $f$ be a convex function and define $F_{\ep} := \varphi_{\ep} * f$. Then $F_{\ep} \geq f$.
\end{proposition}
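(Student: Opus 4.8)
The plan is to invoke Jensen's inequality (Theorem \ref{thm:Jensen}) with the probability density $\varphi_{\ep}$, combined with the integral computation already performed in Proposition \ref{prop:MollifyIdentityIsIdentity}. Fix $x \in \R$ and $\ep > 0$. Since $\varphi$ is a mollifier we have $\varphi_{\ep} \geq 0$ and $\int_{\R}\varphi_{\ep}\,\mathrm{d}\lambda = 1$, so $\varphi_{\ep}$ is an admissible weight for Jensen's inequality. Taking the measurable function $g(t) = x - t$ and noting that $\dom f = \R \supseteq \img g$ because $f$ is convex on all of $\R$, Jensen's inequality gives
\[
f\!\left(\int_{\R}(x-t)\varphi_{\ep}(t)\,\mathrm{d}\lambda(t)\right) \le \int_{\R} f(x-t)\varphi_{\ep}(t)\,\mathrm{d}\lambda(t) = F_{\ep}(x).
\]

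Next I would evaluate the argument of $f$ on the left-hand side. By linearity of the integral, $\int_{\R}(x-t)\varphi_{\ep}(t)\,\mathrm{d}\lambda(t) = x\int_{\R}\varphi_{\ep}\,\mathrm{d}\lambda - \int_{\R} t\,\varphi_{\ep}(t)\,\mathrm{d}\lambda(t) = x - 0$, where the second integral vanishes because, under Assumption \ref{ass: mol}, $\varphi_{\ep}$ is supported on the symmetric interval $[-\ep,\ep]$ and $t \mapsto t\varphi_{\ep}(t)$ is odd. This is exactly the computation carried out in the proof of Proposition \ref{prop:MollifyIdentityIsIdentity} with $a = 1$ and $b = 0$, so it may simply be cited. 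Substituting yields $f(x) \le F_{\ep}(x)$, and since $x \in \R$ was arbitrary the claim follows.

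A few routine points deserve a brief mention for completeness: convexity of $f$ on $\R$ forces $f$ to be continuous, hence Borel measurable and in $L^1_{loc}(\R)$, so $F_{\ep} = \varphi_{\ep} * f$ is well-defined and Theorem \ref{thm:PropertiesOfMollifying} applies; moreover the compact support of $\varphi_{\ep}$ turns all of the integrals above into integrals over $[-\ep,\ep]$, so splitting them causes no integrability issue. I do not expect a genuine obstacle here — the only subtlety is the domain hypothesis of Jensen's inequality, which is automatic when $f$ is convex on all of $\R$; had one assumed convexity only on a subinterval, one would need $\ep$ small enough that $[x-\ep,x+\ep]$ stays inside that interval, in the same spirit as Proposition \ref{prop:LocalConvexity}.
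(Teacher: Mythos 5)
Your proof is correct and takes essentially the same route as the paper's: Jensen's inequality (Theorem \ref{thm:Jensen}) applied with the weight $\varphi_{\ep}$ and the measurable map $t \mapsto x-t$, followed by the identity $\varphi_{\ep} * \id = \id$ from Proposition \ref{prop:MollifyIdentityIsIdentity} to evaluate the inner integral as $x$. The additional remarks on measurability and the domain hypothesis are fine but add nothing beyond the paper's argument.
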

\begin{proof}
    Note that $\varphi_{\ep}$ satisfies the conditions in Jensen's
    inequality and that $f$ is convex. Let $x \in \R$, and note that the
    function $y \in \R \mapsto x-y$ is continuous, and therefore measurable.
    Applying Jensen's inequality
    \begin{align*}
        F_{\ep}(x) = (\varphi_{\ep} * f)(x) &=
        \int_{\R}f(x-y)\varphi_{\ep}(y)\mathrm{d}\lambda(y) \\
        &\geq f\left(\int_{\R}(x-y)\varphi_{\ep}(y)\mathrm{d}\lambda(y)\right) \\
        &= f( (\varphi_{\ep} * \id)(x)).
    \end{align*}
    The result now follows from Proposition \ref{prop:MollifyIdentityIsIdentity}
    since $x$ is arbitrary.
\end{proof}
\begin{remark}
    Note that a similar result also holds if $f$ is concave, where clearly
    $F_{\ep}$ would be below $f$ in that case.
\end{remark}

The following result resembles Proposition \ref{prop:LocalConvexity}.

\begin{proposition}[Local convexity and $F_{\ep}\geq f$]\label{prop:MolliAboveFConvexityLocally}
    Let $\varphi$ be even and satisfying Assumption \ref{ass: mol}, and $f : \R \to \R$ be a 
    function that is convex in $(a,b)$. Then, for each $x,y \in (a,b)$ with $x < y$ there exists a $\delta  =  \delta(x,y)> 0$ such that for all $\ep \in (0,\delta)$ we have that $F_{\ep} = \varphi_{\ep} * f \geq f$ on $(x,y)$.
\end{proposition}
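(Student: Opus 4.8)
The idea is to localize the proof of Proposition~\ref{prop:ConvexityThenMolliAbove}. There, global convexity of $f$ was combined with Jensen's inequality (Theorem~\ref{thm:Jensen}) and the fact that mollification fixes affine maps (Proposition~\ref{prop:MollifyIdentityIsIdentity}) to conclude $\varphi_\ep*f\ge f$ everywhere. Here $f$ is convex only on $(a,b)$, so the plan is to choose $\ep$ small enough that, at every point $z\in(x,y)$, the weighted average defining $F_\ep(z)$ only ever evaluates $f$ at points of $(a,b)$; on that region the same Jensen argument goes through.

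Concretely, I would proceed as follows. Fix $x<y$ in $(a,b)$; since $(a,b)$ is open, pick $\delta=\delta(x,y)>0$ with $[x-\delta,\,y+\delta]\subset(a,b)$. Let $\ep\in(0,\delta)$ and $z\in(x,y)$. By Assumption~\ref{ass: mol}, $\supp\varphi_\ep=[-\ep,\ep]$, so $F_\ep(z)=\int_{(-\ep,\ep)}f(z-t)\varphi_\ep(t)\,\mathrm d\lambda(t)$ and each argument satisfies $z-t\in(z-\ep,z+\ep)\subset(x-\delta,y+\delta)\subset(a,b)$. To bring this into the form required by Theorem~\ref{thm:Jensen}, I would define $g:\R\to\R$ by $g(t)=z-t$ for $t\in(-\ep,\ep)$ and $g(t)=z$ for $|t|\ge\ep$; then $g$ is measurable, $\img g\subset(a,b)$, on which $f$ is convex, and since $\varphi_\ep$ vanishes outside $(-\ep,\ep)$ one has $\int_\R(f\circ g)\varphi_\ep=F_\ep(z)$ and $\int_\R g\,\varphi_\ep=z\int_{(-\ep,\ep)}\varphi_\ep-\int_{(-\ep,\ep)}t\,\varphi_\ep(t)\,\mathrm d\lambda(t)=z$, the vanishing first moment being exactly the odd-integrand observation from the proof of Proposition~\ref{prop:MollifyIdentityIsIdentity} (this is where evenness of $\varphi$ enters). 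Jensen's inequality then gives $f(z)=f\!\left(\int_\R g\,\varphi_\ep\right)\le\int_\R(f\circ g)\varphi_\ep=F_\ep(z)$, and since $z\in(x,y)$ was arbitrary, $F_\ep\ge f$ on $(x,y)$ for every $\ep\in(0,\delta)$.

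An alternative route is to extend $f|_{[x-\delta,y+\delta]}$ to a globally convex $\tilde f\in L^1_{loc}(\R)$ by gluing on the supporting lines of $f$ at $x-\delta$ and $y+\delta$ (using the one-sided derivatives, which exist because $f$ is convex on the strictly larger interval $(a,b)$), then apply Proposition~\ref{prop:ConvexityThenMolliAbove} to $\tilde f$ and note that $\varphi_\ep*\tilde f=F_\ep$ on $(x,y)$ whenever $\ep<\delta$. Either way, no genuinely new analytic difficulty arises beyond what was already needed for Proposition~\ref{prop:LocalConvexity}; the only point requiring care is the bookkeeping that the truncation (or extension) leaves the two identities $\int g\,\varphi_\ep=z$ and $\int(f\circ g)\,\varphi_\ep=F_\ep(z)$ intact — which it does, precisely because $\varphi_\ep$ has compact support $[-\ep,\ep]$ and, by evenness, vanishing first moment.
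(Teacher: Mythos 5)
Your proposal is correct and follows essentially the same route as the paper's proof: shrink $\delta$ so that $(x-\delta,y+\delta)\subset(a,b)$, use the compact support $[-\ep,\ep]$ of $\varphi_\ep$ to confine all evaluations of $f$ to that interval, apply Jensen's inequality there, and identify the inner integral with the point itself via the vanishing first moment (Proposition~\ref{prop:MollifyIdentityIsIdentity}). Your extra care in redefining $g$ outside $(-\ep,\ep)$ so that $\img g\subset(a,b)$ is a harmless technical refinement of the same argument, and the alternative via a globally convex extension is a valid but unnecessary detour.
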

\begin{proof}
    Let $x,y \in (a,b)$.
    There exists a $\delta>0$ such that $(x-\delta,y+\delta) \subset (a,b)$.
    Choose $\ep \in (0,\delta)$ and take $\xi \in (x,y) \subset (a,b)$.
    Then $(\xi-\ep,\xi+\ep) \subset (x-\ep,y+\ep) \subset
    (x-\delta,y+\delta)\subset (a,b)$.  Since $f$ is convex in
    $(\xi-\ep,\xi+\ep)$ we can apply Jensen's inequality leading to
    \begin{align*}
        F_{\ep}(\xi) &= (\varphi_{\ep} * f)(\xi) 
        =
        \int_{(-\ep,\ep)}f(\xi-t)\varphi_{\ep}(t)\mathrm{d}\lambda(t) \\
        &\geq
        f\left(\int_{(-\ep,\ep)}(\xi-t)\varphi_{\ep}(t)\mathrm{d}\lambda(t)\right) \\
        &= f( (\varphi_{\ep} * \id)(\xi))
        = f(\xi),
    \end{align*}
    where the last equality comes from Proposition
    \ref{prop:MollifyIdentityIsIdentity}. Since $\xi \in (x,y)$ was arbitrary
    the result follows.
\end{proof}

\begin{remark}
Note that as it can be seen in Example \ref{example: LocalConvexityNotPreserved}, Proposition
    \ref{prop:MolliAboveFConvexityLocally} does not hold for any $\ep > 0$. Indeed, also note that if the function is locally concave then we reach a similar result where the original function will be above the mollified function.
\end{remark}

\subsection{Quasiconvexity under mollification}
Any convex function is quasiconvex, but the converse does not hold in general. Therefore, quasiconvexity is a weaker condition than convexity, and it is important to study how it is preserved under mollification. From an engineering standpoint, this is particularly useful for estimating the shape of the mollified curve.
\begin{definition}[Quasiconvex function]\label{def:Quasiconvex}
    Let $S$ be a non empty convex set and $f : S \to \R$ be a real-valued
    function. We say that $f$ is quasiconvex if for all $\alpha \in \R$ the set
    \begin{equation*}
        S_{\alpha} := \{x \in S \mid f(x) \leq \alpha\}
    \end{equation*}
    is  convex.
    An equivalent definition is that $f$ is quasiconvex if for all $x,y\in S$
    and all $\gamma \in [0,1]$ we have
    \begin{equation*}
        f(\gamma x + (1-\gamma)y) \leq \max \{f(x),f(y)\}. \; \square
    \end{equation*}
    
\end{definition}

We need the following lemma before proving that quasiconvexity is preserved
under mollification.
\begin{lemma}\label{lemma:QuasiconvexFunctionMeasurable}
    Let $f : \R \to \R$ be a quasiconvex function, $a,b \in \R$, and define
    $g := a \id + b$; then, the following statements hold:
    \begin{enumerate}
        \item $f$ is measurable, and
        \item $f \circ g$ is quasiconvex.
    \end{enumerate}
\end{lemma}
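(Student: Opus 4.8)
The plan is to prove the two statements essentially independently, both exploiting the elementary fact that in $\R$ the convex sets are exactly the intervals.

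For the first statement I would recall that a function $h : \R \to \R$ is Borel measurable if and only if the sublevel set $\{x \in \R \mid h(x) \le \alpha\}$ is Borel for every $\alpha \in \R$. By Definition \ref{def:Quasiconvex}, for a quasiconvex $f$ each such sublevel set $S_{\alpha} = \{x \in \R \mid f(x) \le \alpha\}$ is convex. The key observation is that a convex subset of $\R$ is an interval (possibly empty, a singleton, or unbounded): if $u,v \in S_{\alpha}$ with $u < v$, convexity forces $[u,v] \subseteq S_{\alpha}$, which is precisely the defining property of an interval. Since every interval in $\R$ — open, closed, or half-open, bounded or not, together with $\emptyset$ and singletons — is a Borel set, we get $S_{\alpha} \in \mathcal{B}(\R)$ for all $\alpha$, and hence $f$ is (Borel) measurable.

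For the second statement I would use the convex-combination characterization of quasiconvexity. The central fact is that an affine map preserves convex combinations: for $g = a\,\id + b$, any $x,y \in \R$, and any $\gamma \in [0,1]$,
\[
g(\gamma x + (1-\gamma)y) = a(\gamma x + (1-\gamma)y) + b = \gamma(ax+b) + (1-\gamma)(ay+b) = \gamma\, g(x) + (1-\gamma)\, g(y).
\]
Applying $f$ and then invoking quasiconvexity of $f$ (the composition is everywhere defined since $\dom f = \R \supseteq \img g$),
\[
(f \circ g)(\gamma x + (1-\gamma)y) = f\big(\gamma\, g(x) + (1-\gamma)\, g(y)\big) \le \max\{f(g(x)), f(g(y))\} = \max\{(f\circ g)(x), (f\circ g)(y)\},
\]
which is exactly quasiconvexity of $f \circ g$. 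Alternatively, one can argue via sublevel sets: $(f\circ g)^{-1}((-\infty,\alpha]) = g^{-1}(S_{\alpha})$, and the preimage of an interval under the affine map $g$ is again an interval (or $\emptyset$, or $\R$, when $a=0$), hence convex; this route also re-derives measurability of $f\circ g$, consistent with statement~1.

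I do not expect a genuine obstacle here. The only mildly delicate point is the identification of convex subsets of $\R$ with intervals in the first part, which is exactly what upgrades ``sublevel sets convex'' to ``sublevel sets Borel''; everything else is a direct computation. I would just be careful to note that the argument for statement~2 requires no case split on $a$, since an affine map collapses to a constant when $a=0$ but still sends convex combinations to convex combinations.
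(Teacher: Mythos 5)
Your proof is correct and follows essentially the same route as the paper's: both parts rest on sublevel sets being convex, hence intervals, hence Borel (the paper reaches ``interval'' via connectedness, you reach it directly from the definition of convexity, which is immaterial), and the second part is the same affine-combination computation, merely phrased with the $\max$-inequality characterization instead of the sublevel-set characterization that the paper uses — Definition~\ref{def:Quasiconvex} states these are equivalent. No gaps.
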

\begin{proof}
First we prove that $f$ is measurable. Note that for any $\alpha \in \R$ the
    set $S_{\alpha} = \{x \in \R \mid f(x) \leq \alpha\}$ is a convex set,
    therefore path connected, hence connected. Since a set in $\R$ is connected
    if and only if it is an interval, then $S_{\alpha}$ is an interval. Thus, it is measurable because any interval is a Borel set.

    We now prove that $f \circ g$ is quasiconvex.    Let $\alpha \in \R$ and
    define the set
    \begin{equation*}
        S_{\alpha} := \{x \in \R \mid (f \circ g)(x) \leq \alpha\} = \{x \in \R
        \mid f(ax  + b) \leq \alpha\}.
    \end{equation*}
    If $S_{\alpha}$ is empty then it is convex by definition, and if
    $S_{\alpha}$ is a singleton is also convex. Therefore, suppose that
    $S_{\alpha}$ consists of at least two elements. Choose $x,y \in S_{\alpha}$
    and $\gamma \in [0,1]$. Then
    \begin{align*}
        f(a(\gamma x + (1-\gamma)y) + b) &= f(\gamma(ax+b) + (1-\gamma)(ay+b)) \\
        &\leq \max \{f(ax+b), f(ay+b)\} \\
        &\leq \max \{\alpha,\alpha\} \\
        &= \alpha.
    \end{align*}
    That is, $\gamma  x + (1-\gamma)y \in S_{\alpha}$. Therefore $S_{\alpha}$ is
    a convex set, proving $f \circ g$ is quasiconvex.
\end{proof}

Having presented the main properties of quasiconvex functions, let us consider a representative quasiconvex case, the monotonic function. Interesting enough, the mollified function of a monotonic path, e.g., a staircase-like sequence of steps, will also be monotonic.
\begin{proposition}[Monotonicity and mollification]\label{prop:Monotonicity}
    Let $f \in L^1_{loc}(\R)$ be monotone increasing (resp. decreasing); then
    for any nonnegative mollifier $\varphi$ and $\ep > 0$ the function
    $F_{\ep} := (\varphi_{\ep} * f)$ is monotone increasing (resp.
    decreasing).  
\end{proposition}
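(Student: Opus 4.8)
The plan is to work directly with the integral representation $F_{\ep}(x) = \int_{\R} f(x-t)\varphi_{\ep}(t)\,\mathrm{d}\lambda(t)$ and to exploit two facts: monotonicity of $f$ is a pointwise order property of the integrand as a function of $x$, and the Lebesgue integral against the nonnegative weight $\varphi_{\ep}$ is order preserving. So the argument mirrors the one used in Proposition \ref{prop:ConvexityIsPreserved}, but with a comparison of shifted values of $f$ rather than Jensen's/convexity inequality.

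First I would dispatch well-definedness of the object in question. A monotone function on $\R$ is Borel measurable, since for every $\alpha \in \R$ the sublevel set $\{x \in \R \mid f(x) \le \alpha\}$ is an interval, hence a Borel set (this is the same observation used in Lemma \ref{lemma:QuasiconvexFunctionMeasurable}). Moreover $f \in L^1_{loc}(\R)$ and $\varphi_{\ep}$ has compact support $[-\ep,\ep]$, so for each $x$ the integral $\int_{(-\ep,\ep)} f(x-t)\varphi_{\ep}(t)\,\mathrm{d}\lambda(t)$ is finite; this is also part of Theorem \ref{thm:PropertiesOfMollifying}. Thus $F_{\ep}$ is a genuine real-valued function.

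Then, for the monotone increasing case, fix $x_1, x_2 \in \R$ with $x_1 \le x_2$. For every $t \in \R$ we have $x_1 - t \le x_2 - t$, so $f(x_1 - t) \le f(x_2 - t)$ because $f$ is increasing. Multiplying by $\varphi_{\ep}(t) \ge 0$ preserves this inequality pointwise in $t$, and integrating over $\R$ (monotonicity of the Lebesgue integral) gives
\[
F_{\ep}(x_1) = \int_{\R} f(x_1 - t)\varphi_{\ep}(t)\,\mathrm{d}\lambda(t) \le \int_{\R} f(x_2 - t)\varphi_{\ep}(t)\,\mathrm{d}\lambda(t) = F_{\ep}(x_2),
\]
so $F_{\ep}$ is monotone increasing. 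The decreasing case is identical with the inequality reversed (or obtained by applying the increasing case to $-f$, using linearity of the convolution).

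There is essentially no obstacle in this proof; the one point worth handling explicitly, and the reason I would address it first, is the measurability of $f$ and the finiteness of $F_{\ep}$, since without that the integral manipulations are not justified. As a minor strengthening one could note that if $f$ is \emph{strictly} increasing then $f(x_1 - t) < f(x_2 - t)$ for all $t$ whenever $x_1 < x_2$, and since $\varphi_{\ep} > 0$ on a set of positive Lebesgue measure the resulting integral inequality is strict, so strict monotonicity is preserved as well.
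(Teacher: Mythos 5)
Your proof is correct and follows essentially the same route as the paper's: the pointwise inequality $f(x_1-t)\le f(x_2-t)$ is integrated against the nonnegative weight $\varphi_{\ep}$, exactly as in Proposition \ref{prop:Monotonicity}'s own proof. The extra remarks on measurability, finiteness, and strict monotonicity are sound but not needed beyond what the paper already assumes.
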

\begin{proof}
    Suppose $f$ is monotone increasing. Let $x,y \in \R$ with $x > y$. Then for
    any $t \in \R$ we have $x-t > y-t$, thus $f(x-t) \geq f(y-t)$.
    \begin{align*}
        F_{\ep}(x)-F_{\ep}(y) &=
        \int_{\R}[f(x-t)-f(y-t)]\varphi_{\ep}(t)\mathrm{d}\lambda(t) \geq 0,
    \end{align*}
    since $\varphi_{\ep}$ is positive. The proof is identical for monotone
    decreasing functions.
\end{proof}

\begin{figure}[ht]
    \centering
    \includegraphics[width=\linewidth]{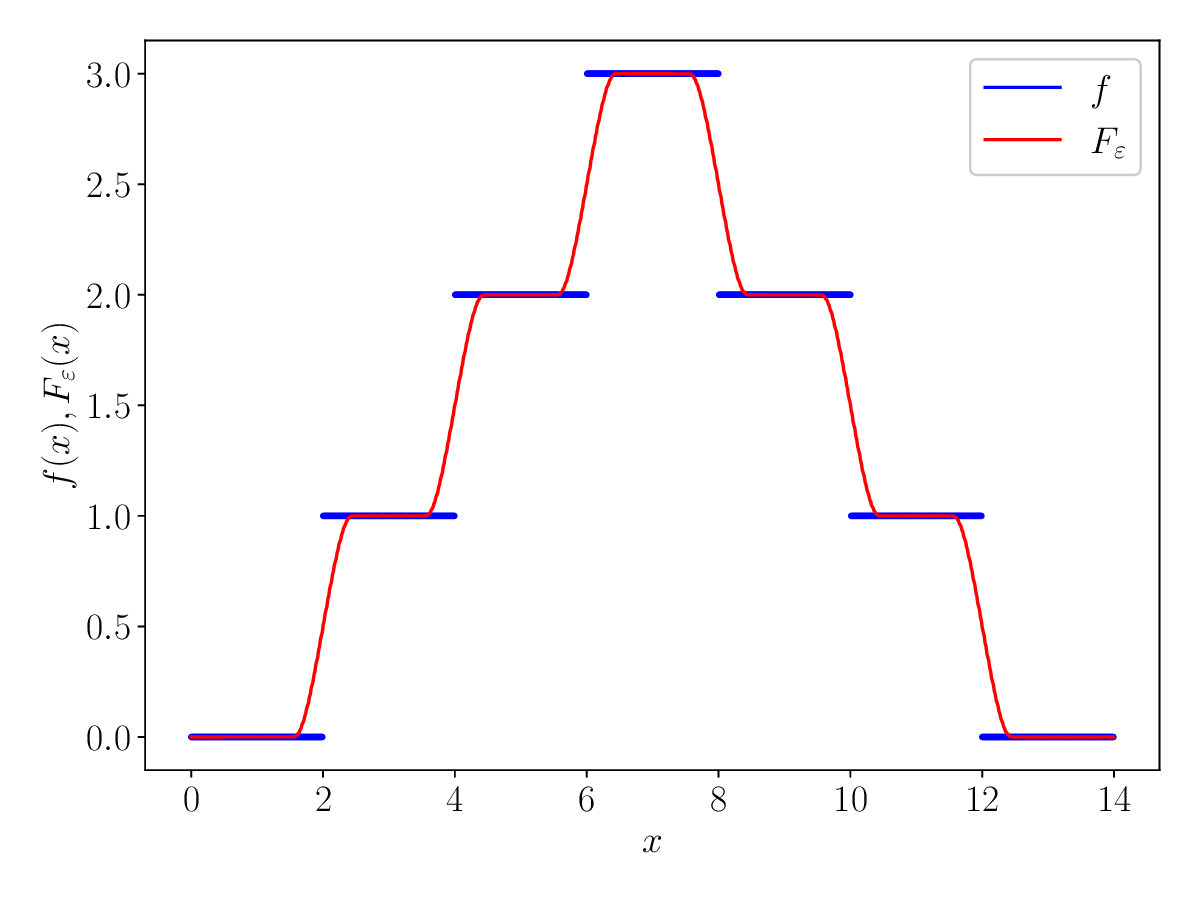}
    \caption{Mollification of stair-step function created using 
    shifted Heaviside step functions. The blue solid line represents the 
    (discontinuous) stair-step function $f$, while the red solid line represents its 
    mollification using the mollifier \eqref{eq:OurMollifier} and 
    $\ep = 0.5$. Due to Proposition \ref{prop:Monotonicity}, $F_{\ep}$ cannot 
    present overshoots or oscillations.}
    \label{fig:MollificationOfStepStairFunction}
\end{figure}
Suppose the desired trajectory is the Heaviside step function $h :\R \to \R$
defined by $h(t) = \ind_{(0,\infty)}(t)$. By this result, its mollification is
also monotonically increasing, so the mollified trajectory cannot exhibit
overshoots or oscillations, see Figure
\ref{fig:MollificationOfStepStairFunction}; this contrasts with the Gibbs
phenomenon. We finally
show that quasiconvexity is preserved under mollification with a nonnegative
mollifier.
\begin{theorem}\label{thm:QuasiconvexityIsPreserved}
    Suppose $f : \R\to \R$ is a quasiconvex function and let $\varphi$ be a mollifier satisfying Assumption \ref{ass: mol}; then, for all $\ep >
    0$ the function $F_{\ep} := f * \varphi_{\ep}$ is quasiconvex.
\end{theorem}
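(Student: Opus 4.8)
I would prove this by showing that every sublevel set $S_{\alpha} := \{z \in \R : F_{\ep}(z) \le \alpha\}$ is an interval. Since $F_{\ep} = f * \varphi_{\ep} \in C^{\infty}(\R,\R)$ by Theorem~\ref{thm:PropertiesOfMollifying}, it is in particular continuous, so this is equivalent to the two-point condition $F_{\ep}(\gamma x + (1-\gamma) y) \le \max\{F_{\ep}(x), F_{\ep}(y)\}$ for all $x < y$ and $\gamma \in [0,1]$.

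The natural first step is to push quasiconvexity of $f$ under the integral. Fix $x < y$, $\gamma \in [0,1]$, and set $z := \gamma x + (1-\gamma)y$. Applying the defining inequality of quasiconvexity to the points $x-t$ and $y-t$ gives, for every $t$,
\begin{equation*}
    f(z - t) = f\bigl(\gamma(x-t) + (1-\gamma)(y-t)\bigr) \le \max\{f(x-t),\, f(y-t)\}.
\end{equation*}
Since $f$ is measurable by Lemma~\ref{lemma:QuasiconvexFunctionMeasurable}, the right-hand side is measurable in $t$; multiplying by the nonnegative kernel $\varphi_{\ep}$ and integrating over its compact support $[-\ep,\ep]$ yields
\begin{equation*}
    F_{\ep}(z) \le \int_{\R} \max\{f(x-t),\, f(y-t)\}\,\varphi_{\ep}(t)\,\mathrm{d}\lambda(t).
\end{equation*}

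To handle the right-hand side I would use the one-dimensional structure of quasiconvexity: such an $f$ is \emph{valley-shaped}, i.e.\ there is $m \in [-\infty,+\infty]$ with $f$ non-increasing on $(-\infty,m)$ and non-decreasing on $(m,\infty)$, which follows from the interval structure of its sublevel sets. This yields two facts. First, since $\varphi_{\ep} \ge 0$ is supported in $[-\ep,\ep]$ (this is where Assumption~\ref{ass: mol} enters), the same monotone comparison under the integral shows that $F_{\ep}$ is non-increasing on $(-\infty, m-\ep)$ and non-decreasing on $(m+\ep,\infty)$, so $S_{\alpha}$ is automatically ``interval-like'' outside the window $[m-\ep,m+\ep]$, and it remains only to rule out a strict interior local maximum of $F_{\ep}$ inside that window. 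Second, because $x<y$, the map $t \mapsto f(x-t)-f(y-t)$ is non-decreasing on the straddling set $(x-m,\,y-m)$ and of constant sign outside it, so there is a threshold $\tau$ with $\max\{f(x-t),f(y-t)\}$ equal to $f(y-t)$ for $t<\tau$ and to $f(x-t)$ for $t>\tau$, which identifies the $\max$ in the bound above.

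The main obstacle is this last step, and I expect it to be genuine: the pointwise bound is lossy, since $\int_{\R}\max\{f(x-t),f(y-t)\}\varphi_{\ep}(t)\,\mathrm{d}\lambda(t) \ge \max\{F_{\ep}(x),F_{\ep}(y)\}$ always holds, so one cannot simply chain it with the inequality we want. Closing the gap appears to need more than nonnegativity and symmetric support: it should exploit that the mollifier's mass is concentrated near the origin — e.g.\ $\varphi$ even and non-increasing in $|x|$, as in Example~\ref{example:OurMollifier}. A plausible route is to pass to $F_{\ep}' = f * \varphi_{\ep}'$ (Theorem~\ref{thm:PropertiesOfMollifying}) and, using the symmetry of $\varphi$, rewrite it as $F_{\ep}'(z) = \int_{0}^{\ep}\bigl(f(z+t)-f(z-t)\bigr)\,\mathrm{d}\mu(t)$ for a nonnegative measure $\mu$, and then deduce from the valley-shape of $f$ that $F_{\ep}'$ changes sign at most once on $[m-\ep,m+\ep]$; combined with the first fact above this would make $F_{\ep}$ globally valley-shaped, hence quasiconvex.
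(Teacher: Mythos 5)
Your proposal does not close: you push quasiconvexity under the integral to obtain $F_{\ep}(z) \le \int_{\R}\max\{f(x-t),f(y-t)\}\,\varphi_{\ep}(t)\,\mathrm{d}\lambda(t)$, correctly observe that this bound is lossy because its right-hand side always dominates $\max\{F_{\ep}(x),F_{\ep}(y)\}$, and then leave the decisive step as a conjecture. The repair you sketch is also not a proof: writing $F_{\ep}'(z)=\int_{0}^{\ep}\bigl(f(z+t)-f(z-t)\bigr)\,\mathrm{d}\mu(t)$ with $\mu\ge 0$ expresses $F_{\ep}'$ as a nonnegative superposition of functions that each change sign once, but at \emph{different} thresholds $\tau_t\in[m-t,m+t]$, and such a superposition can change sign more than once; so the claim that $F_{\ep}'$ changes sign at most once on $[m-\ep,m+\ep]$ does not follow from what you have, even assuming $\varphi$ even and non-increasing in $|x|$.

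That said, your diagnosis of where the difficulty sits is more accurate than the paper's treatment of it. The paper argues by contradiction via two positive-measure sets $V_1,V_2$ on which $f(y-t)>f(x-t)$ and $f(x-t)>f(y-t)$ respectively, and the contradiction it extracts rests on upgrading $f(y-\beta t_1-(1-\beta)t_2)\le\max\{f(y-t_1),f(y-t_2)\}$ to a \emph{strict} inequality against $\max\{f(y-t_1),f(x-t_2)\}$ solely because $f(y-t_2)<f(x-t_2)$; this step fails whenever the maximum is attained at the unchanged entry $f(y-t_1)$, for instance at $\beta=1$. The obstacle is in fact fatal under Assumption~\ref{ass: mol} alone: take $f(w)=\min\{|w|,1\}$, which is quasiconvex, and a smooth nonnegative $\varphi$ with $\supp\varphi=[-1,1]$ and $\int_{\R}\varphi\,\mathrm{d}\lambda=1$ whose mass is concentrated in two bumps near $\pm 5/6$. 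With $\ep=6$ one gets $F_{\ep}\approx\tfrac12 f(\cdot-5)+\tfrac12 f(\cdot+5)$, so $F_{\ep}(0)\approx 1$ while $F_{\ep}(\pm5)\approx\tfrac12$, violating quasiconvexity at the midpoint of $-5$ and $5$; feeding $x=-5$, $y=5$, $t_1=-5$, $t_2=5$ into the paper's argument makes the invalid strict inequality explicit (it asserts $f(10\beta)<1$ for all $\beta\in[0,1]$, which is false at $\beta=1$). So your instinct that nonnegativity plus symmetric support is insufficient, and that one must exploit concentration of the mollifier's mass near the origin, is correct; the hypothesis under which your $F_{\ep}'$ route can be completed (via the variation-diminishing property of the kernel) is log-concavity of $\varphi$, which the bump of Example~\ref{example:OurMollifier} happens to satisfy but which Assumption~\ref{ass: mol} does not imply.
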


\begin{proof}
    We are going to proceed by contradiction. Let $f$ be quasiconvex, and suppose that there exists an $\ep > 0$
    such that $F_{\ep}$ is not quasiconvex.
    This implies there exist $x,y \in \R$ and $\gamma \in [0,1]$ such that
    \begin{equation*}
        F(\gamma x + (1-\gamma)y) > \max\{F(x),F(y)\}.
    \end{equation*}
    In particular
    \begin{align*}
        &F_{\ep}(\gamma x + (1-\gamma)y) - F_{\ep}(x) \\
        &=
        \int_{(-\ep,\ep)}\left[f(\gamma x +
        (1-\gamma)y-t)-f(x-t)\right]\varphi_{\ep}(t)\mathrm{d}\lambda(t) > 0.
    \end{align*}
    First, we claim that the set $V_1 := \{t \in (-\ep,\ep) \mid f(\gamma x +
    (1-\gamma)y-t)-f(x-t) > 0\}$ is measurable and it has positive measure.
    By Lemma \ref{lemma:QuasiconvexFunctionMeasurable} we know that the composition of a quasiconvex function with an affine mapping is quasiconvex, hence
    measurable. Since the linear combination of two measurable function results
    in a measurable function, the set $V_1$ is measurable. Thus, if
    $\lambda(V_1)$ were zero, then it would be negligible in integration, which
    would imply that
    \begin{align*}
        0&< \int_{(-\ep,\ep)}\left[f(\gamma x +
        (1-\gamma)y-t)-f(x-t)\right]\varphi_{\ep}(t)\mathrm{d}\lambda(t) \\
        &=\int_{\left[(-\ep,\ep)\setminus V_1\right] \cup V_1}\left[f(\gamma x +
        (1-\gamma)y-t)-f(x-t)\right]\varphi_{\ep}(t)\mathrm{d}\lambda(t)  \\
        &=\int_{(-\ep,\ep)\setminus V_1}\left[f(\gamma x +
        (1-\gamma)y-t)-f(x-t)\right]\varphi_{\ep}(t)\mathrm{d}\lambda(t),
    \end{align*}
    but $(-\ep,\ep)\setminus V_1 = \{t \in (-\ep,\ep) \mid t \notin V_1\} = \{t
    \in (-\ep,\ep) \mid f(\gamma x + (1-\gamma)y-t)-f(x-t) \leq 0\}$ and
    $\varphi$ is nonnegative, thus
    \begin{equation*}
        \int_{(-\ep,\ep)\setminus V_1}\left[f(\gamma x +
        (1-\gamma)y-t)-f(x-t)\right]\varphi_{\ep}(t)\mathrm{d}\lambda(t) \leq 0,
    \end{equation*}
    which leads to the first contradiction. Therefore $\lambda(V_1) > 0 \Longrightarrow V_1
    \neq \emptyset$. Moreover, since $f$ is quasiconvex we have that for any $t
    \in V_1$
    \begin{align*}
        f(x-t) &< f(\gamma x + (1-\gamma)y -t)  \\
        &=
        f(\gamma(x-t)+(1-\gamma)(y-t)) \\
        &\leq \max\{f(x-t),f(y-t)\}.
    \end{align*}
    Now let us consider that
    \begin{equation}\label{eq:ConjectureProofEq1}
        f(y-t) > f(x-t), \quad \forall t \in V_1.
    \end{equation}
    because if not, we would reach a contradiction; therefore, proving that our assumption
    about $F_{\ep}$ is false, thus proving the theorem.

    The same procedure can be done considering the other point, $y$, and
    $F_{\ep}(\gamma x + (1-\gamma)y) - F_{\ep}(y) > 0$, i.e., there exists a
    set of positive measure $V_2 \subset (-\ep,\ep)$ such that
    \begin{equation*}
        f(\gamma x + (1-\gamma)y -t) - f(y-t) > 0, \quad \forall t \in V_2,
    \end{equation*}
    and since $f$ is quasiconvex, with the same arguments as above
    \begin{equation}\label{eq:ConjectureProofEq2}
        f(x-t) > f(y-t), \quad \forall t \in V_2.
    \end{equation}
    Suppose $V_1 \cap V_2 \neq \emptyset$. This implies that there exists a $t
    \in V_1\cap V_2$ such that $f(y-t) < f(x-t) < f(y-t)$ which is again a
    contradiction, thus proving the theorem. So we just need to prove that we
    reach a contradiction in the case $V_1 \cap V_2 = \emptyset$. Take $t_1 \in
    V_1$ and $t_2 \in V_2$ and any $\beta \in [0,1]$, then
    \begin{align*}
        f(y-\beta t_1 - (1-\beta)t_2) &= f(\beta(y-t_1)+(1-\beta)(y-t_2)) \\
        &\leq
        \max \{f(y-t_1), f(y-t_2)\}
    \end{align*}
    However, $f(y-t_2) < f(x-t_2)$ because of \eqref{eq:ConjectureProofEq2},
    thus
    \begin{align*}
        f(y-\beta t_1 - (1-\beta)t_2)< \max\{f(y-t_1), f(x-t_2)\}, \\ 
        \quad \forall
        \beta \in [0,1].
    \end{align*}
    Using the same approach and considering 
    \eqref{eq:ConjectureProofEq1} we have that
    \begin{align*}
        f(x-\beta t_1 - (1-\beta)t_2) < \max\{f(y-t_1), f(x-t_2)\}, \\ \quad
        \forall \beta \in [0,1].
    \end{align*}
    Since this is independent of the value of $\beta$,  and in particular for
    $\beta = 1$ and $\beta = 0$ we have that
    \begin{align*}
        f(y-t_1) < \max\{f(y-t_1),f(x-t_2)\} \\
        f(x-t_2) < \max\{f(y-t_1),f(x-t_2)\},
    \end{align*}
    but this leads to a contradiction. Therefore, the assumption that there
    exists an $\ep > 0$ for which $F_{\ep}$ is not quasiconvex is false. That
    is, $F_{\ep}$ is quasiconvex for any $\ep > 0$.
\end{proof}

\begin{remark}
\label{rem: Q}
    The converse is not in general true. That is, having $F_{\ep}$ quasiconvex
    for some $\ep > 0$ does not imply that $f$ is quasiconvex. For example, 
    consider the following measurable function that is not quasiconvex
    \begin{equation*}
       f(x) = \begin{cases}
            1, & x\in \R \setminus \Q \\
            2, & x \in \Q
        \end{cases},
    \end{equation*}
    because the set $S_{1.5} = \{x \in \R \mid
    f(x) \leq 1.5\} = \R \setminus \Q$ is disconnected, hence it is not convex.
    However, since $\lambda(\Q)= 0$, we have that for any $x \in \R$
    \begin{align*}
        F_{\ep}(x) &= \int_{\R}\varphi(x-t)f(t)\mathrm{d}\lambda(t) 
        = \int_{\R \setminus
            \Q}\varphi(x-t)\mathrm{d}\lambda(t) \\
            &=
        \int_{(\R\setminus \Q) \cup \Q} \varphi(x-t) \mathrm{d}\lambda(t) = \int_{\R}
        \varphi \mathrm{d}\lambda = 1,
    \end{align*}
    which is convex; thus quasiconvex too.  Moreover, it is clear that if $F_{\ep}$ is quasiconvex for all
    $\ep > 0$ then $f$ is quasiconvex since the pointwise limit of a
    family of quasiconvex functions can be shown to be quasiconvex function as well.
\end{remark}

\subsection{Enclosure and length of paths}
So far, we have worked with real-valued functions because we represent desired paths or trajectories as parametric functions, i.e., functions $f : \R \to \R^n$. 
Nonetheless, we still want to characterize (in advance) how mollification affects the complete function $f$, that is, treat it as a whole. We now address the following question: given the original trajectory, does there exist a subset $U$ of $\R^n$ such that the mollified trajectory is contained in $U$ for any value of its parameter? The answer is affirmative, with $U$ being the convex hull of $f(\R)$.
\begin{definition}
    Let $A \subset \R^n$ be a set. Its convex hull, denoted as $\co(A)$ is defined
    as the smallest convex set that contains $A$, that is, $A \subset \co(A)$.  \hfill $\square$
\end{definition}

We first present a result when $\dom f = \R$.
\begin{theorem}\label{thm:ConvexHull}
    Let $f : \R \to \R^n$ be a measurable function and $\varphi$ be a nonnegative
    mollifier. Define for $t \in \R$ and $\ep > 0$
    \begin{equation*}
        F_{\ep}(t) := (f * \varphi_{\ep})(t) = \left((f_1 * \varphi_{\ep})(t),
        \dots, (f_{n}*\varphi_{\ep})(t)\right).
    \end{equation*}
    Then, given $\ep > 0$, we have that
    \begin{equation*}
        \left\{F_{\ep}(t) \mid t \in \R\right\} \subset \co\left\{f(t) \mid t \in \R\right\}.
    \end{equation*}
\end{theorem}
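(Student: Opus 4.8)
The plan is to show that $F_\ep(t)$, for each fixed $t$, is an ``average'' of the points $f(s)$ with $s$ ranging over $\dom f$, and hence lies in their convex hull. The key structural fact is that $\int_\R \varphi_\ep(t-s)\,\mathrm{d}\lambda(s) = 1$ and $\varphi_\ep \geq 0$, so that $s \mapsto \varphi_\ep(t-s)$ is a probability density supported on $[t-\ep,t+\ep]$, and $F_\ep(t) = \int_\R f(s)\,\varphi_\ep(t-s)\,\mathrm{d}\lambda(s)$ is the barycenter (expectation) of $f$ under this density.

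First I would fix $\ep > 0$ and $t \in \dom f = \R$, and rewrite $F_\ep(t) = \int_\R f(s)\varphi_\ep(t-s)\,\mathrm{d}\lambda(s)$ via the change of variables $s = t - y$ in Definition \ref{def:ConvolutionDefinition}, componentwise. Set $C := \co\{f(s) \mid s \in \R\}$, a convex subset of $\R^n$. The heart of the argument is the standard fact that the integral of a function against a probability measure lands in the closed convex hull of its essential range; concretely, I would invoke the ``reverse'' form of Jensen's inequality / separating-hyperplane characterization of the convex hull: a point $p \in \R^n$ lies in $\overline{C}$ (or in $C$ if $C$ is already closed) if and only if for every $a \in \R^n$, $\dotProduct{a}{p} \leq \sup_{s}\dotProduct{a}{f(s)}$. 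For each fixed $a$, define the measurable map $h_a(s) := \dotProduct{a}{f(s)}$; then $\dotProduct{a}{F_\ep(t)} = \int_\R h_a(s)\varphi_\ep(t-s)\,\mathrm{d}\lambda(s) \leq \big(\sup_s h_a(s)\big)\int_\R \varphi_\ep(t-s)\,\mathrm{d}\lambda(s) = \sup_s h_a(s) \leq \sup_{s}\dotProduct{a}{f(s)}$, using $\varphi_\ep \geq 0$ and total mass one. Since this holds for every $a$, the separation argument places $F_\ep(t)$ in the closed convex hull of $f(\R)$.

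The main obstacle I anticipate is the gap between the convex hull $\co\{f(s)\}$ as literally defined in the paper (the smallest convex set containing the range, not necessarily closed) and the closed convex hull that the separating-hyperplane / barycenter argument naturally yields; one also must handle the case where some $\sup_s h_a(s) = +\infty$ (the inequality is then vacuous) and, if one wants membership in $C$ rather than $\overline{C}$, argue that the barycenter of a probability measure cannot escape to the relative boundary unless the measure is degenerate. A clean way around this, and the route I would actually take, is to reduce to Proposition \ref{prop:ConvexityThenMolliAbove}: for every affine functional $\ell(x) = \dotProduct{a}{x} + b$ the composition $\ell \circ f$ is a scalar measurable function with $(\ell\circ f)*\varphi_\ep = \ell \circ F_\ep$ pointwise (by linearity of the integral together with Proposition \ref{prop:MollifyIdentityIsIdentity} applied to the additive constant), so bounding $\ell \circ f$ above or below by a constant transfers to $\ell \circ F_\ep$; intersecting over all closed half-spaces containing $f(\R)$ gives exactly $\overline{\co}\{f(s)\}$, and the paper can either state the conclusion with a closure or add the mild hypothesis that $f(\R)$ (equivalently its convex hull) is closed — e.g. automatic once $\dom f$ is replaced by a compact interval, which is the case of interest flagged right before the theorem. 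I would finish by noting the componentwise change of variables is routine and that the whole argument uses only $\varphi_\ep \geq 0$ and $\int \varphi_\ep = 1$, matching the stated hypotheses.
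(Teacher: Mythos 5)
Your route is genuinely different from the paper's. You work with support functions: for every $a \in \R^n$ you bound $\dotProduct{a}{F_{\ep}(t)} \leq \sup_s \dotProduct{a}{f(s)}$ using $\varphi_{\ep} \geq 0$ and unit mass, and then intersect the resulting closed half-spaces. The paper instead applies Jensen's inequality (Theorem \ref{thm:Jensen}, in its vector-valued form) to the extended-real-valued convex indicator $I_U$ of $U = \co\{f(t)\mid t\in\dom f\}$, taking the value $0$ on $U$ and $+\infty$ off $U$: since $I_U\circ f \equiv 0$, Jensen forces $I_U(F_{\ep}(t)) \leq 0$, i.e.\ $F_{\ep}(t)\in U$ itself. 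This one-line device is exactly what closes the gap you correctly flag: your half-space argument only places $F_{\ep}(t)$ in $\overline{\co\{f(s)\mid s\in\R\}}$, which is strictly weaker than the stated conclusion whenever the hull is not closed, whereas the paper's conclusion has no closure. So as written your proposal proves a slightly weaker statement; your suggested fixes (add a closure to the statement, or assume a compact domain as in Remark \ref{rmk:ContinuousExtension}) are honest but amount to changing the theorem. The stronger, closure-free claim is in fact true in finite dimensions---the barycenter of a probability measure giving full mass to a set $A\subset\R^n$ lies in $\co(A)$, not merely its closure---but proving it along your lines requires upgrading the separation step to \emph{proper} separation plus an induction on the dimension of the affine hull (your ``cannot escape to the relative boundary unless degenerate'' remark is the right intuition); that is precisely the content hidden inside Jensen applied to $I_U$. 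In exchange, your approach is more elementary in that it avoids extended-real-valued convex functions and works with ordinary linear functionals, and your observation that affine functionals commute with mollification (via Proposition \ref{prop:MollifyIdentityIsIdentity}) is a clean reduction; but to match the theorem as stated you would need to add the proper-separation/induction step rather than settle for the closed convex hull.
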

\begin{proof}
    Let $U = \co\{f(t) \mid t \in \R\}$. Define the extended real valued 
    function $I_{U} : \R^n \to \R \cup \{-\infty,\infty\}$ as
    \begin{equation*}
        I_{U}(x) = \begin{cases}
            +\infty, & x \notin U  \\
            0, & x \in U
        \end{cases}.
    \end{equation*}
    The function $I_{U}$ is clearly convex. Fix $ t \in \dom f$. Noting that
    $\int_{\R}\varphi_{\ep}\mathrm{d}\lambda = 1$ and $\varphi_{\ep} \geq 0$, we can
    apply Jensens' Inequality in higher dimensions to get
    \begin{align*}
        0 \leq I_U(F_{\ep}(t)) &=
        I_{U}\left(\int_{\supp
        \varphi_{\ep}}f(t-s)\varphi_{\ep}(s)\mathrm{d}\lambda(s)\right) \\
        &\leq
        \int_{\supp \varphi_{\ep}}I_{U}(f(t-s))\varphi_{\ep}(s)\mathrm{d}\lambda(s).
    \end{align*}
    However, note that $I_{U}(f(t-s)) = 0$ for any $t-s\in \dom f$, and since 
    $\dom f = \R$ then 
    \begin{equation*}
        0 \leq I_U(F_{\ep}(t)) \leq 
        \int_{\supp \varphi_{\ep}}I_{U}(f(t-s))\varphi_{\ep}(s)\mathrm{d}\lambda(s) = 0,
    \end{equation*}
    i.e., $I_U(F_{\ep}(t)) = 0$ so $F_{\ep}(t) \in U$. Since $t$ and $\ep$ 
    are arbitrary, the claim follows.
\end{proof}
\begin{remark}\label{rmk:ContinuousExtension}
    It is common that the path is a continuous
    function defined in a compact subset of $\R$, i.e., $f : [a,b] \to \R^n$
    with $-\infty < a < b < \infty$. In such a case, we can extend the function $f$ to $\R$ as follows to get a new continuous function
\begin{equation*}
    \bar f(t) = \begin{cases}
        f(a), & -\infty < t \leq a \\
        f(x), & a \leq t \leq  b \\
        f(b), & b \leq t < \infty
    \end{cases}.
\end{equation*}
Note that $\bar f([a,b]) = f([a,b])$ and $\bar f((-\infty,a]\cup[b,\infty))
= \{f(a),f(b)\}$, so $\bar{f}(\R) = f([a,b])$ and then $\co \bar{f}(\R) =
\co f([a,b])$. Then we can use as our path $\bar{f}$ instead of $f$,
obtaining the result of the previous theorem, and later restricting 
the domain of the mollified function to $[a,b]$ again, i.e.,
we let $\bar F_{\ep} = \bar{f} * \varphi_{\ep}$ and use the mollified
curve $F_{\ep} = \bar{F}_{\ep}|_{[a,b]}$, thus
\begin{equation*}
    F_{\ep}([a,b]) \subset \bar{F}(\R) \subset \co \bar f(\R) = \co
    f([a,b]).
\end{equation*}
Clearly for $t \in [a+\ep,b-\ep]$, $F_{\ep}$ coincides with the
mollification of $f$, and in $[a,a+\ep)$ and $(b-\ep,b]$ it
belongs to the convex hull of $f([a,b])$.
\end{remark}

Having characterized the space in which the mollified path is enclosed, we now consider the relationship between the length of the original path and its mollification. First, we introduce the definition of path length for paths that do not need to be differentiable.
\begin{definition}[Length of $f$]
    Let $f : [a,b] \to \R^n$ be a continuous function, and $||\cdot|| : \R^n \to [0,\infty)$ be any norm in $\R^n$. Let a finite set $P = \{x_0,x_1,\dots,x_N\}$, where $a = x_0 < x_1 < \dots < x_N = b$ be a partition of $[a,b]$. Then, the length of $f$ is
    \begin{equation*}
        L(f) := \sup_{P \text{ partition of } [a,b]}\sum_{i=1}^{N}||f(x_i)-f(x_{i-1})||. \, \square
    \end{equation*}
\end{definition}

Note that when working with trajectories with compact domain, we must extend them as done in Remark \ref{rmk:ContinuousExtension}.
\begin{lemma}\label{lem:LengthOfPaths}
    Let $f : [a,b] \to \R^n$ be a continuous function and fix $\ep > 0$. Let $\bar f : 
    [a-\ep,b+\ep]$ be its continuous extension as done in Remark
    \ref{rmk:ContinuousExtension}. The following two statements are true:
    \begin{enumerate}
        \item $L(\bar f) = L(f)$.
        \item If $|t| \leq \ep$ and $g(s) = \bar f(s-t)$ for all $s \in [a,b]$,
        then $L(g) \leq L(\bar f) = L(f)$.
    \end{enumerate}
\end{lemma}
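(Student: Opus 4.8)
The plan is to reduce both claims to a single elementary fact: refining a partition can only increase the polygonal sum, which is just the triangle inequality in $(\R^n,\|\cdot\|)$. So first I would record the refinement monotonicity: if $P\subseteq P'$ are partitions of an interval, then the sum associated with $P'$ is at least that associated with $P$, because inserting one node $c$ strictly between consecutive nodes $x_{i-1}<x_i$ replaces the term $\|\bar f(x_i)-\bar f(x_{i-1})\|$ by $\|\bar f(x_i)-\bar f(c)\|+\|\bar f(c)-\bar f(x_{i-1})\|$, which is no smaller. From this I would extract the corollary I actually use: whenever $[c,d]\subseteq[c',d']$ and $h$ is continuous on $[c',d']$, one has $L(h|_{[c,d]})\le L(h|_{[c',d']})$, since every partition of $[c,d]$ can be enlarged by the two endpoints $c',d'$ to a partition of $[c',d']$, and discarding the two nonnegative boundary segments only decreases the sum.

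For the first statement, I would prove $L(\bar f)=L(f)$ by two inequalities. The inequality $L(f)\le L(\bar f)$ is the corollary applied to $[a,b]\subseteq[a-\ep,b+\ep]$, together with the observation $\bar f|_{[a,b]}=f$. For the reverse, take an arbitrary partition $P$ of $[a-\ep,b+\ep]$ and refine it to $P':=P\cup\{a,b\}$, so its sum only grows. Because $a,b\in P'$, the sum over $P'$ splits cleanly as the sum over the portion of $P'$ in $[a-\ep,a]$, plus the portion in $[a,b]$, plus the portion in $[b,b+\ep]$. On $[a-\ep,a]$ every value of $\bar f$ equals $f(a)$, so that block contributes $0$; likewise the $[b,b+\ep]$ block contributes $0$; and the middle block contributes at most $L(f)$. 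Hence the sum over $P$ is $\le L(f)$, and taking the supremum over $P$ gives $L(\bar f)\le L(f)$.

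For the second statement, note first that for $s\in[a,b]$ and $|t|\le\ep$ we have $s-t\in[a-\ep,b+\ep]$, so $g(s)=\bar f(s-t)$ is well defined, and the translation $s\mapsto s-t$ is a bijection of $[a,b]$ onto $[a-t,b-t]\subseteq[a-\ep,b+\ep]$ sending partitions to partitions and leaving every summand $\|\bar f(s_i-t)-\bar f(s_{i-1}-t)\|$ unchanged. Therefore $L(g)=L(\bar f|_{[a-t,b-t]})$. Applying the corollary to $[a-t,b-t]\subseteq[a-\ep,b+\ep]$ and then the first statement yields $L(g)=L(\bar f|_{[a-t,b-t]})\le L(\bar f)=L(f)$, which is the desired bound.

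I do not anticipate a real obstacle; the only points requiring a little care are making the three-block decomposition in the first statement legitimate (which is why one forces $a$ and $b$ into the partition before splitting) and confirming that the parameter translation in the second statement genuinely identifies the two supremums rather than merely giving a one-sided estimate. Everything else is bookkeeping with the triangle inequality and the definition of $L$.
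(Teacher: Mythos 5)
Your proof is correct and follows essentially the same route as the paper's: split the partition at $a$ and $b$ so the constant blocks on $[a-\ep,a]$ and $[b,b+\ep]$ contribute nothing, and for the second statement use translation invariance of the polygonal sums together with monotonicity under enlarging the interval. Your explicit refinement step $P':=P\cup\{a,b\}$ cleanly justifies a point the paper's proof only assumes implicitly (that the partition of $[a-\ep,b+\ep]$ may be taken to contain $a$ and $b$), but this is a presentational improvement rather than a different argument.
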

\begin{proof}
    We prove each statement separately.
    \begin{enumerate}
        \item Take a partition $P$ of $[a-\ep,b+\ep]$ with $N$ elements,
        such that there exists $0 < J < K < N$ such that $x_j = a$ and $x_k =
        b$. Then
        \begin{align*}
            &\sum_P||\bar f(x_i)-\bar f(x_{i-1})|| \\
            &=
            \sum_{i=1}^{J-1}||f(x_i)-f(x_{i-1})|| +
            \sum_{i=J}^{K}||f(x_i)-f(x_{i-1})|| \\
            &+ \sum_{i={K+1}}^{N}||f(x_i)-f(x_{i-1})|| \\
            &= \sum_{i=J}^K||f(x_i)-f(x_{i-1})|| \leq L(f),
        \end{align*}
        where the last inequality comes from the fact that $\{x_J, \dots, x_K\}$ is a
        partition of $[a,b]$.
        Therefore, by definition of the supremum $L(\bar f) \leq L(f)$. The inequality
        $L(f) \leq L(\bar f)$ holds trivially by noting that a partition of $[a,b]$
        can be extended to create a partition of $[a-\ep,a+\ep]$ and we are summing
        positive terms. Therefore $L(f) = L(\bar f)$.
        \item Let $|t| \leq \ep$ and consider a partition $P = \{x_0,\dots,x_N\}$ of
        $[a,b]$. Clearly $P- \{t\} = \{x_0-t,\dots,x_n-t\}$ could be considered
        as a subset of a partition of $[a-\ep,a+\ep]$. Therefore by constructing
        $P' = (P-\{t\})\cup\{a-\ep,b+\ep\}$ then
        \begin{align*}
            &\sum_P||g(x_i)-g(x_{i-1})|| \\
            &= \sum_P||\bar f(x_i-t)-\bar
            f(x_{i-1}-t)|| \\
            &= \sum_{P-\{t\}}||\bar f(y_i)-\bar f(y_{i-1})|| \\
            &\leq
            \sum_{(P-\{t\})\cup\{a-\ep,b+\ep\}}||f(y_i)-f(y_{i-1})|| \\
            &\leq
            \sup_{P \text{ partition of }[a-\ep,b+\ep]}\sum_P||f(y_i)-f(y_{i-1})||
            =L(\bar f).
        \end{align*}
        Since the supremum is the least upper bound, it follows that $L(g) \leq L(\bar f) = L(f)$.
    \end{enumerate}
\end{proof}

Now we are ready for the main result regarding the length of the generated mollified path being shorter or equal than the original.
\begin{theorem}\label{thm:Length}
   Let $f : [a,b] \to \R^n$ be a continuous function and let $\varphi$ be a nonnegative
   mollifier. Fixed $\ep > 0$ let $\bar f :[a-\ep,b+\ep] \to \R^n$ be the
   continuous extension as in remark \ref{rmk:ContinuousExtension}. Define 
   $F : [a,b] \to \R^n$ as $F = \bar f * \varphi_{\ep}$; then $L(F) \leq L(f)$.
\end{theorem}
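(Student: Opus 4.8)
The plan is to bound, for an arbitrary partition $P = \{x_0,\dots,x_N\}$ of $[a,b]$, the polygonal sum $\sum_{i=1}^{N}\|F(x_i)-F(x_{i-1})\|$ by $L(f)$, and then pass to the supremum over all such $P$. First I would record two preliminaries: $F$ is continuous (indeed $C^\infty$) by Theorem \ref{thm:PropertiesOfMollifying}, so $L(F)$ is well defined; and, because $\supp\varphi_{\ep}=[-\ep,\ep]$, for every $x\in[a,b]$ we may write $F(x)=\int_{[-\ep,\ep]}\bar f(x-t)\,\varphi_{\ep}(t)\,\mathrm{d}\lambda(t)$, where only values of $\bar f$ on $[a-\ep,b+\ep]$ are used — this is exactly why the extension $\bar f$ is needed.

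Next, for each consecutive pair of partition points I would write
\begin{equation*}
  F(x_i)-F(x_{i-1}) = \int_{[-\ep,\ep]}\bigl(\bar f(x_i-t)-\bar f(x_{i-1}-t)\bigr)\varphi_{\ep}(t)\,\mathrm{d}\lambda(t),
\end{equation*}
apply the triangle inequality for the chosen norm under the $\R^n$-valued integral (which reduces to the scalar case componentwise, using $\varphi_{\ep}\ge 0$) to get
\begin{equation*}
  \|F(x_i)-F(x_{i-1})\| \le \int_{[-\ep,\ep]}\|\bar f(x_i-t)-\bar f(x_{i-1}-t)\|\,\varphi_{\ep}(t)\,\mathrm{d}\lambda(t),
\end{equation*}
and then sum the finitely many inequalities over $i$, interchanging the finite sum with the integral (legitimate since each integrand is continuous in $t$, hence measurable, and there are finitely many terms):
\begin{equation*}
  \sum_{i=1}^{N}\|F(x_i)-F(x_{i-1})\| \le \int_{[-\ep,\ep]}\Bigl(\sum_{i=1}^{N}\|\bar f(x_i-t)-\bar f(x_{i-1}-t)\|\Bigr)\varphi_{\ep}(t)\,\mathrm{d}\lambda(t).
\end{equation*}

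The crucial step is to identify the inner sum. For a fixed $t$ with $|t|\le\ep$, set $g_t(s):=\bar f(s-t)$ for $s\in[a,b]$; then $\{x_0,\dots,x_N\}$ is a partition of $[a,b]$, so by the very definition of path length $\sum_{i=1}^{N}\|g_t(x_i)-g_t(x_{i-1})\|\le L(g_t)$, and by Lemma \ref{lem:LengthOfPaths}(2) we have $L(g_t)\le L(\bar f)=L(f)$. Substituting this uniform bound and using $\int_{[-\ep,\ep]}\varphi_{\ep}\,\mathrm{d}\lambda=1$ gives $\sum_{i=1}^{N}\|F(x_i)-F(x_{i-1})\|\le L(f)$. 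Since $P$ was arbitrary, taking the supremum over all partitions of $[a,b]$ yields $L(F)\le L(f)$.

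I do not anticipate a genuine obstacle: the only points needing a little care are the integral form of the triangle inequality for the vector-valued integrand, and making sure the inner sum really is a polygonal approximation of $g_t$ restricted to $[a,b]$ — which is precisely what Lemma \ref{lem:LengthOfPaths}(2) was set up to handle. The essential conceptual ingredient is the role of $\bar f$: without the extension, $x_i-t$ could leave $[a,b]$ near the endpoints and the bound $L(g_t)\le L(f)$ would fail.
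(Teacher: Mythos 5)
Your proposal is correct and follows essentially the same route as the paper's proof: bound each $\|F(x_i)-F(x_{i-1})\|$ by the integral of $\|\bar f(x_i-t)-\bar f(x_{i-1}-t)\|\varphi_{\ep}(t)$, swap the finite sum with the integral, recognize the inner sum as a polygonal sum for the shifted path $g_t$ and bound it by $L(g_t)\le L(\bar f)=L(f)$ via Lemma \ref{lem:LengthOfPaths}, then use $\int\varphi_{\ep}\,\mathrm{d}\lambda=1$ and take the supremum over partitions. No substantive differences.
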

\begin{proof}
    Take a partition $P$ of $[a,b]$, then
    \begin{align*}
        &\sum_P||F(x_i)-F(x_{i-1})|| \\
        &\underset{||\cdot||\text{ Jens. ineq}}{\leq}
        \sum_P\int_{[-\ep,\ep]}||\bar f(x_i-t)-\bar f(x_{i-1}-t)||\varphi_{\ep}(t)\mathrm{d}t \\
        &\underset{\text{linearity of integral}}{\leq}\int_{[-\ep,\ep]}\sum_P||\bar
        f(x_i-t)-\bar f(x_{i-1}-t)||\varphi_{\ep}(t)\mathrm{d}t \\
        &\leq \int_{[-\ep,\ep]}\sup_{P' \text{ part. of } [a,b]}\sum_{P'}||\bar
        f(x_i-t)-\bar f(x_{i-1}-t)||\varphi_{\ep}(t)\mathrm{d}t \\
        &\underset{\text{Lemma
        \ref{lem:LengthOfPaths}}}{\leq}\int_{[-\ep,\ep]}L(\bar
        f)\varphi_{\ep}(t)\mathrm{d}t =L(\bar f) = L(f),
    \end{align*}
    thus, by definition of the supremum, $L(F) \leq L(f)$.
\end{proof}
\begin{remark}\label{rmk:EndingAndInitialPoints}
Note that this does not imply that if $f$ is a geodesic between $f(a)$ and
$f(b)$, then $F$ is also a geodesic between these points. This is because $F(a)
\neq f(a)$ or $F(b) \neq f(b)$ may occur. While Theorem \ref{thm:ConvexHull} and
Remark \ref{rmk:ContinuousExtension} guarantee that $F([a,b]) = (\bar
f*\varphi_{\ep})([a,b]) \subset \co f([a,b])$, we cannot ensure that $F$ has the
same starting and ending points as $f$. What Theorem \ref{thm:Length}
establishes is that by considering the actual starting and ending points of $F$,
we can ensure that $L(F) \leq L(f)$. An example of this property is 
shown in Figure \ref{fig:MolliDifferentPointsAndLengths}. The original function,
which is a linear interpolation of three points in $\R^2$ and whose domain is
$[a,b]=[0,2]$, is extended to the domain $[-\ep,2+\ep]$ with $\ep  = 0.5$ using
the extension presented in remark \ref{rmk:ContinuousExtension}. As it can be
seen from each of its components, $F_{\ep}(a) \neq f(a)$ and $F_{\ep}(b) \neq f(b)$,
and clearly $L(F_{\ep}) \leq L(f)$ as Theorem \ref{thm:Length} states. Finally
note that, while $f_2$ can be considered a geodesic between the points $(0,0)$
and $(2,2)$ in $\R^2$ using the Euclidean norm, $(f_2 * \varphi_{\ep})$ is not a
geodesic between those two points.
\end{remark}

\begin{figure}[t]
    \centering
    \includegraphics[width=\linewidth]{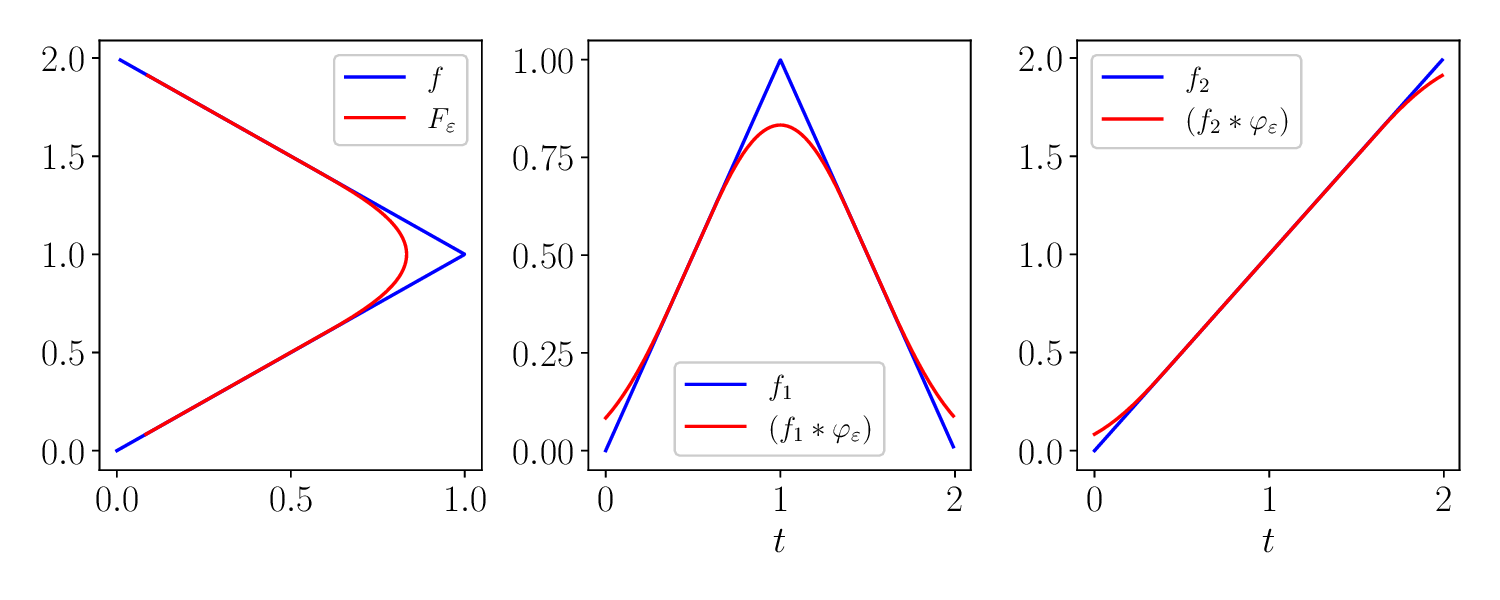}
    \caption{Visual representation of remark \ref{rmk:EndingAndInitialPoints}.
    The left picture represents the original path $f$ as a blue solid 
    line, and as a red solid line its mollification $F_{\ep} = (f_1 *
    \varphi_{\ep}, f_2 * \varphi_{\ep})$ where $\varphi$ is as in     \eqref{eq:OurMollifier} and $\ep = 0.5$. The middle
    picture represents the same information but for the first component of the
    function and its mollification, while the right picture represents the 
    same information but for the second component.}
    \label{fig:MolliDifferentPointsAndLengths}
\end{figure}

\subsection{The effect of reparametrization and mollification}
Suppose the desired path is encoded using a continuous function $f :
[a,b] \to \R^n$ and for a given $\ep > 0$ we consider its mollification with
parameter $\ep$. As we have seen, we first need to extend the function $f$ to $\bar{f} : [a-\ep,b+\ep]$ as is done in Remark \ref{rmk:ContinuousExtension} or
in Theorem \ref{thm:Length}. It may be of interest to reparametrize the curve
so it is normalized, i.e., to find a function $g : [-\ep,1+\ep] \to [a-\ep,b+\ep]$, and consider $\bar{f} \circ g * \varphi_{\ep} : [0,1] \to \R^n$, as the
desired mollified path. How does the parameter of 
the mollification change under these conditions? That is, does there
exist a $\eta = \eta(\ep) > 0$ such that
\begin{equation*}
    (\bar f \circ g * \varphi_{\ep}) ([0,1]) = (\bar f * \varphi_{\eta})([a,b])?
\end{equation*}

What we require is a continuous function $g:[-\ep,1+\ep] \to [a-\ep,b+\ep]$ that is strictly increasing and satisfies $g([0,1]) = [a,b]$. However, it is easy to see that such a function must be nonlinear. In the best-case scenario where such a function exists it is invertible, if it happens to be differentiable, we arrive at the following conclusions. Suppose $g : [-\ep,1+\ep] \to [a-\ep,b+\ep]$ is a continuously differentiable, increasing function that is nonlinear but satisfies $g([0,1]) = [a,b]$—as previously required. In this case, let $s \in [0,1]$ and applying the change of variables $v(t) = g(s-t)$,
we get
    \begin{align*}
        &(f*\varphi_{\ep})(s)=\int_{[-\ep,\ep]}f(g(s-t))\varphi_{\ep}(t)\mathrm{d}t \\
        &=
        \int_{[g(s-\ep),g(s+\ep)]}f(v)\varphi\left(\frac{s-g^{-1}(v)}{\ep}\right)\frac{1}{\ep
        g'(g^{-1}(v))}\mathrm{d}v \\
        &= \int_{[g(s-\ep), g(s+\ep)]}f(v)\varphi_{\ep}(s-g^{-1}(v))\frac{1}{g'(g^{-1}(v))}\mathrm{d}v.
    \end{align*}
Since $g$ is nonlinear, there is no straightforward way to solve for $v$ and
obtain a convolution-like expression with a single parameter in terms of $\ep$.
The effect to the reparametrization on $\ep$ may seem like an artificial question
to be posed. Nevertheless, note that
for a planar $f$ that is parametrized
in arc length, its curvature can be simply computed as $\kappa(s) = ||f''(s)||_2$,
with $s \in [0,L(f)]$.
Nevertheless, the arc-length parametrization is, in general, non-linear.
Therefore we have shown that we cannot find an upper
bound for the curvature that depends on the parameter $\ep$ for the mollified curve using
arc-length parametrization. Moreover, we add
that it can be shown, but it is not included in this work 
due to its cumbersome formulas, that if the mapping $g : [-\ep,1+\ep] \to
[a-\ep,b+\ep]$ is affine, continuous and increasing, then it is \textit{unique}, and
there is an expression relating $\eta$ and $\ep$, which can be easily found by
rudimentary computations. Nevertheless, it happens that
$(\bar{f} \circ g * \varphi_{\ep})([0,1]) \subset (\bar{f} *
\varphi_{\eta})([a,b])$, which implies that we do not generate the complete
mollified path after the reparametrization. Thus, even in the affine reparametrization
situation, the mollification does not behave well under the reparametrization of
curves.

\section{Curvature guarantees from a sequence of waypoints}
\label{sec: curvature}
In this section, we show how to systematically analyze the curvature of the generated path. In particular, we provide a formula to upper bound the curvature of the mollification of a sequence of 2D or 3D waypoints connected by straight line segments, i.e., via linear interpolation. First, we restrict ourselves to the simpler case of two segments. From now on, $\R^n$ denotes either $\R^2$ or $\R^3$.
\subsection{The case of three points forming two segments}
Suppose the desired path can be encoded using a parametric
function of the following form.
\begin{definition}[Two line segments function]\label{def:TwoLinesSegment}
 Let $P_0,P_1,P_2 \in \R^n$, and let $f : [0,2] \to \R^n$ be 
\begin{equation}\label{eq:TwoLineSegments}
    f(t) = \begin{cases}
        P_0 + (P_1-P_0)t, & t \in [0,1], \\
        P_1 + (P_2-P_1)(t-1), & t \in [1,2]
    \end{cases}.
\end{equation}
We call $f$ the two-line segments function. And we call
\begin{equation}\label{eq:TwoLinesSegmentsExtended}
    \bar{f}(t) = \begin{cases}
        P_0 + (P_1-P_0)t, & t \in (-\infty,1], \\
        P_1 + (P_2-P_1)(t-1), & t \in [1,\infty)
    \end{cases},
\end{equation}
the two-lines segment extended function. Note that 
$\bar{f}\mid_{[0,2]} = f$. \hfill $\square$
\end{definition}

We know by Proposition \ref{prop:MollifyIdentityIsIdentity} that
affine functions are invariant under mollification when
Assumption \ref{ass: mol} is met and the mollifier is an
even function, such as in Example \ref{example:OurMollifier}.
From Figure \ref{fig:MollifOfAbs} and Proposition \ref{prop:ConvexityThenMolliAbove}
it is clear that for the curve defined in Definition \ref{def:TwoLinesSegment}
it happens that $f(1) \neq F_{\ep}(1)$ for all $\ep > 0$. However,
it of special interesting to compute the intervals in which the
original and the mollified curve coincide, because in those
intervals there is no approximation, there is an equality
between the original and mollified curve. This is easily answered
in the next proposition, in which we show that the initial
and ending points of the original and mollified curve
also coincide.
\begin{proposition}\label{prop:EqualityInSets}
    Let $\bar{f}$ be as in \eqref{eq:TwoLinesSegmentsExtended}.
    Suppose $\ep \in (0,\frac{1}{2})$, define for $r \in \{1,2\}$,
    $V_r := [r-1+\ep,r-\ep]$
    and let $F_{\ep} := \bar{f} * \varphi_{\ep}$ with $\varphi$ even.
    Then $F_{\ep}|_{V}=\bar{f}|_V = f_V$,  $F_{\ep}(0) = f(0)$, and $F_{\ep}(2) =
    f(2)$. 
\end{proposition}

\begin{proof}
    Let $t \in V_r$, and note that for any $s \in [-\ep,\ep]$ it holds
    that $r-1 \leq t-\ep < t-s < t+\ep  \leq r$. Thus, if $r = 1$ the function to be mollified is the line
    $P_0 + (P_1-P_0)t$ for any $t \in V_r$ and if $r = 2$ the function
    to be mollified is the line $P_1 + (P_2-P_1)(t-1)$ for any 
    $t \in V_r$. Thus, by Proposition \ref{prop:MollifyIdentityIsIdentity}
    we get the desired result. The proof that the initial and ending points
    are the same it is carried in a similar fashion.
\end{proof}

It is straightforward to note that $f$ is differentiable
in $(0,2) \setminus \{1\}$ and $\bar{f}$ in $\R\setminus\{1\}$. 
Note that in both cases, the set of points on which the functions
are not differentiable form a set of measure zero, and the expressions
of their derivatives are constant functions. 
Let $t \in \R\setminus\{1\}$, then
\begin{equation*}
    \bar{f}'(t) = \begin{cases}
        P_1-P_0, & t \in (-\infty,1) \\
        P_2-P_1, & t \in (1,\infty)
    \end{cases}.
\end{equation*}
For $r \in \N$ we define $\PP_r := P_{r} - P_{r-1}$. Note that both $f$ and
$\bar{f}$ are continuous functions, hence locally integrable, and from Theorem
\ref{thm:PropertiesOfMollifying} we have that if $F_{\ep} := \bar{f} *
\varphi_{\ep}$ where $\varphi$ is any mollifier and $\ep > 0$ then $F_{\ep}' =
\bar{f} * \varphi_{\ep}'$. It is not difficult to prove that in this case,
$F_{\ep}' = \bar{f} * \varphi_{\ep}' = \bar{f}' * \varphi_{\ep}$ for any $\ep >
0$.

We can now exploit these results  to obtain a formula for the curvature, as well
as an upper bound.

\subsubsection{Computing the exact curvature}
We know by the discussion above that $F_{\ep}' =
(\bar{f}*\varphi'_{\ep}) = (\bar{f}' * \varphi_{\ep})$ everywhere. Consider a
mollifier $\varphi$ and let $\ep > 0$. Given $t \in \R \setminus \{1\}$, note
that
\begin{equation*}
    \bar{f}'(t) = \PP_1 \ind_{(-\infty,1)}(t) + \PP_2\ind_{(1,\infty)}(t),
\end{equation*}
hence for $ t \in \R$
\begin{align*}
    F_{\ep}'(t) &= \int_{\R}\varphi_{\ep}(t-s)\bar{f}'(s)\mathrm{d}s \\
    &= 
    \int_{(-\infty,1]}\varphi_{\ep}(t-s)\tilde{P}_1\mathrm{d}s + \int_{[1,\infty)}\varphi_{\ep}(t-s)\tilde{P}_2\mathrm{d}s \\
    &= \tilde{P}_1 \int_{(-\infty,1]}\varphi_{\ep}(t-s)\mathrm{d}s + \tilde{P}_2
    \int_{[1,\infty)}\varphi_{\ep}(t-s)\mathrm{d}s.
\end{align*}
Note that if $\Phi_{\ep} : \R \to \R$ is such that $\Phi_{\ep}' = \varphi_{\ep}$, then 
\begin{align*}
    \frac{d}{dt}\int_{[a,b]}\varphi_{\ep}(t-s)\mathrm{d}s &=
    \frac{d}{dt}\int_{[t-b,t-a]}\varphi_{\ep}(u)\mathrm{d}u \\
    &= \frac{d}{dt}(\Phi_{\ep}(t-b)-\Phi_{\ep}(t-a)) \\
    &=\varphi_{\ep}(t-b)-\varphi_{\ep}(t-a),
\end{align*}
thus
\begin{equation*}
    F_{\ep}''(t) = \varphi_{\ep}(t-1)(\PP_1-\PP_2).
\end{equation*}

Now define
\begin{align*}
    A_1(t) &:= \int_{(-\infty,1]}\varphi_{\ep}(t-s)\mathrm{d}s \\
    A_2(t) &:= \int_{[1,\infty)}\varphi_{\ep}(t-s)\mathrm{d}s,
\end{align*}
therefore, if $\kappa : \R \to \R$ is the curvature,

\begin{align*}
    \kappa(t) &= \frac{|| F_{\ep}''(t) \wedge F_{\ep}'(t)||_2}{||F_{\ep}'(t)||_2^3}
    \\ &= \frac{||\varphi_{\ep}(t-1)(\PP_1-\PP_2) \wedge
    (\tilde{P}_1 A_1(t) + \tilde{P}_2 A_2(t))||_2}{||F_{\ep}'(t)||_2^3} \\
    &=
    \varphi_{\ep}(t-1)|A_2(t)+A_1(t)|\frac{
    ||\tilde{P}_2\wedge \tilde{P}_1||_2}{||\tilde{P}_1A_1(t)+\tilde{P}_2A_2(t)||_2^3},
\end{align*}

and noting that, due to the properties of the mollifier,
$A_1(t)+A_2(t) = 1$ and $A_1(t),A_2(t) \geq 0$ for all
$t \in \R$, we have that 
\begin{equation}\label{eq:CurvatureClosedFormula}
    \kappa(t) = \varphi_{\ep}(t-1)\frac{
    ||\tilde{P}_2\wedge \tilde{P}_1||_2}{||\tilde{P}_1A_1(t)+\tilde{P}_2A_2(t)||_2^3}.
\end{equation}
Equation \eqref{eq:CurvatureClosedFormula} is an exact formula for the curvature
at each $t \in \R$.
\subsubsection{Upper bounding the curvature}
Note that $\varphi_{\ep}(t-1) \leq \frac{1}{\ep}||\varphi||_{\infty}$ for all
$t \in \R$. Moreover, it is clear that $F_{\ep}'(t)$ is the convex combination
of $\PP_1$ and $\PP_2$. Therefore
\begin{equation*}
    ||F_{\ep}'(t)||_2^2 \geq \min_{s\in[0,1]}||s\PP_1 + (1-s)\PP_2||_2^2 =: \min_{s\in[0,1]}g(s).
\end{equation*}
Note that $g$ is a differentiable convex function, so its minimum exists in the 
compact set $[0,1]$ and by the KKT conditions it is necessary and sufficient
to find an $\bar{s} \in [0,1]$ such that $g'(\bar{s}) = 0$. In this case
\begin{equation*}
    g'(\bar{s}) =0 \Longleftrightarrow \bar{s} = \frac{\dotProduct{\tilde{P}_2-\tilde{P}_1}{\tilde{P}_2}}{||\tilde{P}_2-\tilde{P}_1||_2^2}.
\end{equation*}
Since $g$ is positive  $||F_{\ep}||_2 \geq \sqrt{g(\bar{s})}$.
Also note that when differentiating, and making it equal to $0$ we are not constraining the values 
of $\bar{s}$. It may happen that $\bar{s} < 0$ or $\bar{s} > 0$. Nevertheless, since the function
$g$ is convex—in fact, strictly convex as long as $\PP_1 \neq \PP_2$—we know that  if the minimum of the unconstrained problem is not in the feasible set, i.e., $[0,1]$, then it  is at the boundaries of the feasible set. For this reason if it happens that
that $\bar{s} < 0$ then $g(0) = ||\tilde{P}_1||_2$ is the minimum value because 
$\bar{s} < 0$ is where the minimum occurs and the function is convex, while if $\bar{s}
> 1$ then $g(1) = ||\tilde{P}_2||_2$ is the minimum value.
Therefore,
\begin{align*}
\min_{s\in [0,1]}&||\tilde{P}_1s - \tilde{P}_2(1-s)||_2
 \\
&=\begin{cases}
        \left|\left|\tilde{P}_1
    \bar{s} +
    \left(1-\bar{s}\right)\tilde{P}_2\right|\right|_2,
    & 0 \leq \bar{s} \leq 1 \\
    \min\{||\tilde{P}_1||_2, ||\tilde{P}_2||_2\}, & \text{ otherwise }
    \end{cases}
\end{align*}
From which it follows that if 
\begin{align*}
    M(\tilde{P}_1,\tilde{P}_2)
    := \begin{cases}
        \frac{1}{\left|\left|\tilde{P}_1
    \bar{s} +
    \left(1-\bar{s}\right)\tilde{P}_2\right|\right|_2^3},
    & 0 \leq \bar{s} \leq 1 \\
    \max\left\{\frac{1}{||\tilde{P}_1||_2^3}, \frac{1}{||\tilde{P}_2||_2^3}\right\}, & \text{ otherwise }
    \end{cases},
\end{align*}
then $\frac{1}{||F_{\ep}'(t)||_2^3} \leq M(\PP_1,\PP_2)$, and using  \eqref{eq:CurvatureClosedFormula} we
arrive at
\begin{equation}\label{eq:UpperBoundCurvature}
    \kappa(t) \leq \frac{1}{\ep}||\varphi||_{\infty}||\PP_1\wedge\PP_2||_2M(\PP_1,\PP_2), \quad \forall t \in \R.
\end{equation}
That is, we have found an upper bound on the curvature for two segments that is independent of $t$. Figures \ref{fig:FirstUpperBoundCurvature} and \ref{fig:SecondUpperBoundCurvature} illustrate this upper bound for two different curves. Note that when the segments have similar lengths, the upper bound equals the maximum curvature—making it the tightest possible bound. However, when one segment is significantly longer than the other, the zone of maximum curvature shifts from $t = 1$ due to the mollification process. In any case, we can confidently assert that this upper bound is a good approximation of the maximum curvature, and in many cases optimal.
\begin{figure}
    \centering
    \includegraphics[width=\linewidth]{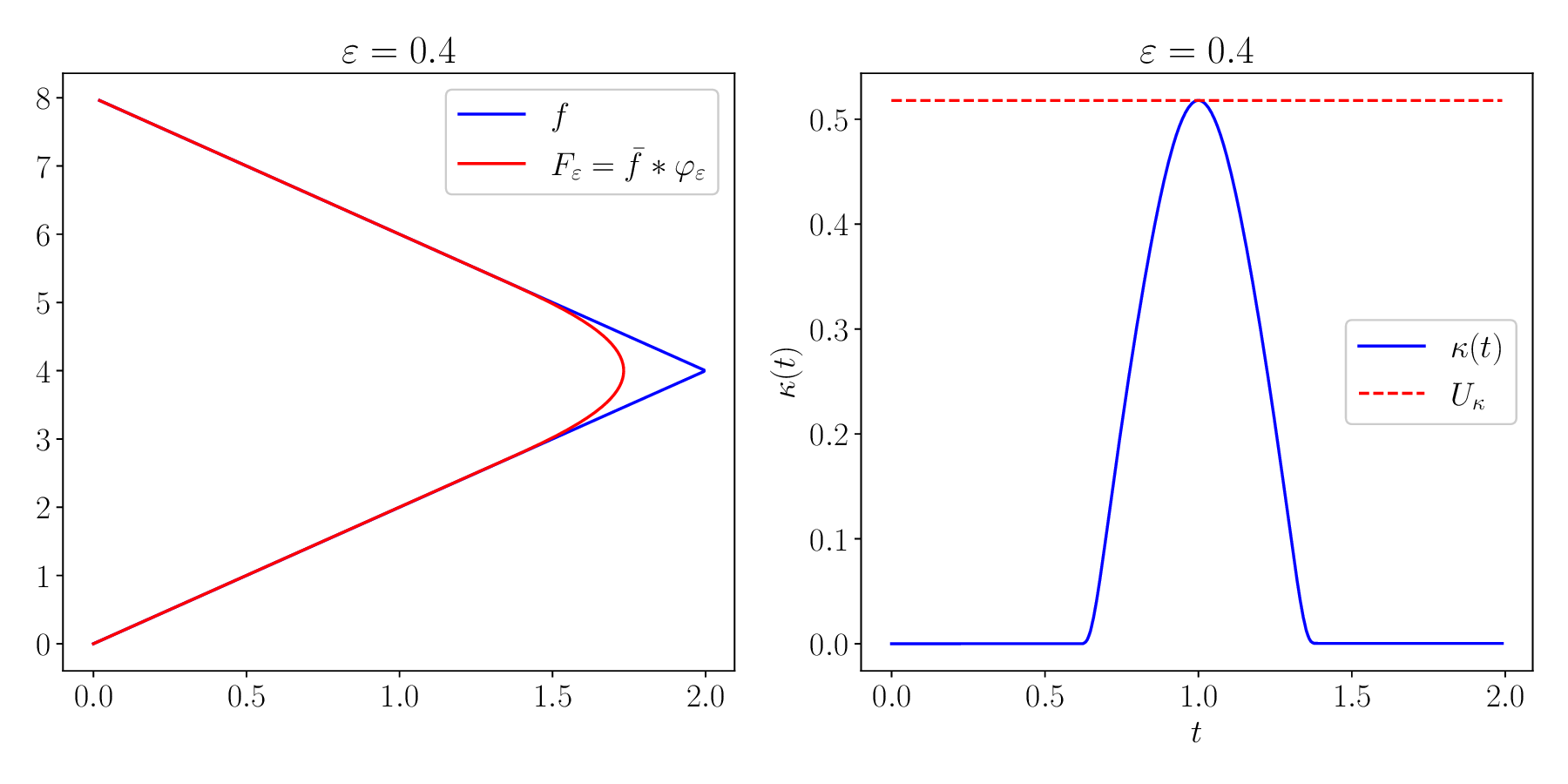}
    \caption{Left plot represents of a three-point-two-segment function and its mollification. Right plot represents
    the curvature of the function and its 
    upper bound $U_{\kappa}$ which is 
    the right hand side of
    \eqref{eq:UpperBoundCurvature}. In this
    case the zone of maximum curvature 
    corresponds to a point really close (or equal) to $t= 1$.
     We have used the mollifier presented in Example \ref{example:OurMollifier}.}
    \label{fig:FirstUpperBoundCurvature}
\end{figure}
\begin{figure}
    \centering
    \includegraphics[width=\linewidth]{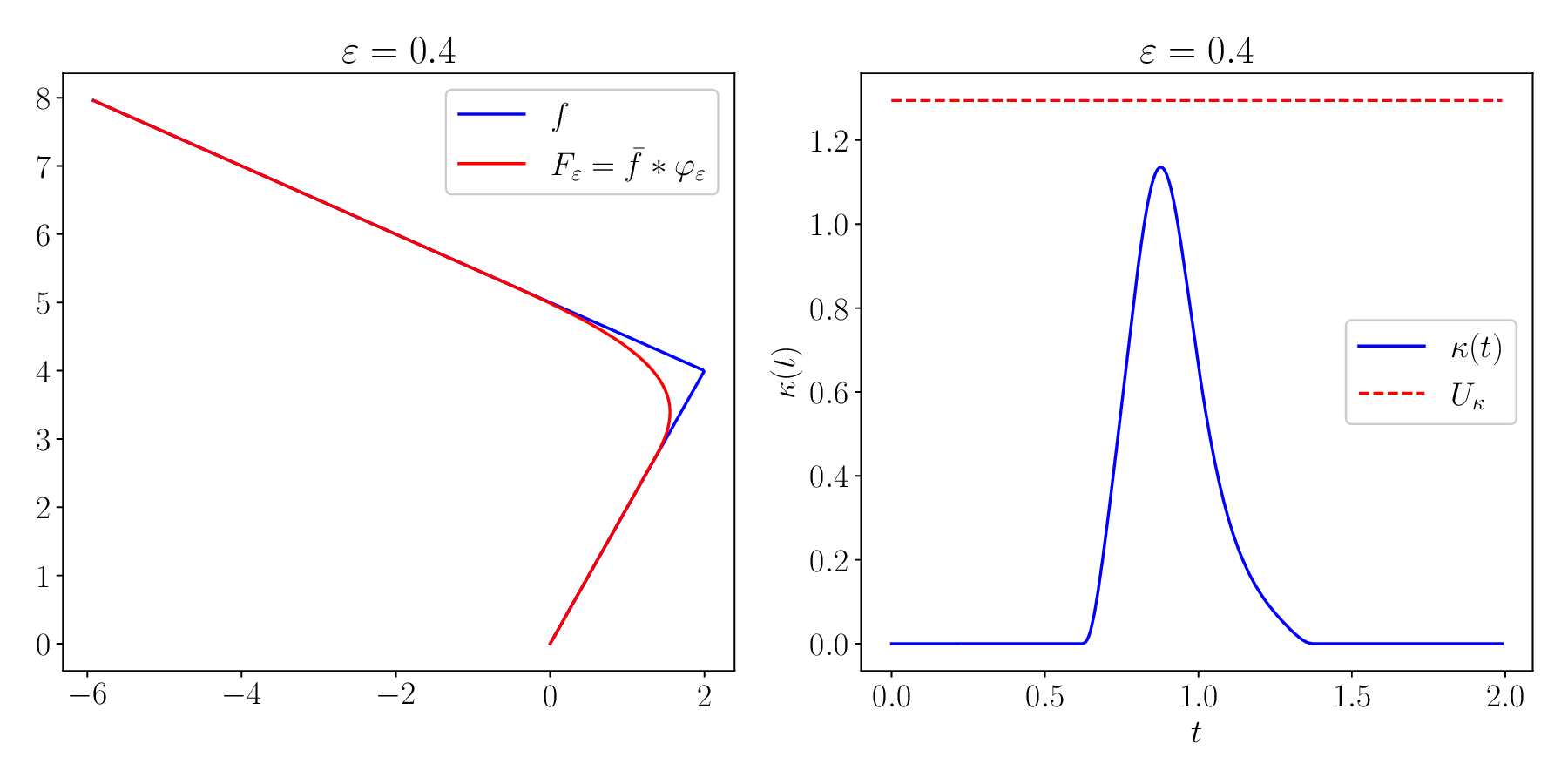}
    \caption{Left plot represents of a three-point-two-segment function and its mollification. Right plot represents
    the curvature of the function and its 
    upper bound $U_{\kappa}$ which is 
    the right hand side of
    \eqref{eq:UpperBoundCurvature}. In this 
    case the zone of maximum curvature does not correspond to the join of the 
    two segments. We have used the mollifier presented in Example \ref{example:OurMollifier}.}
    \label{fig:SecondUpperBoundCurvature}
\end{figure}

\subsection{The general case}
We are going to present how, the natural generalization of the previous computations
to a $p >2$ segments curve, gives, in general, a worse result than considering
the curvature of each pair of segments locally, and then choosing the most restrictive
$\ep_{max} > 0$ so that all curvatures are constraint under this $\ep_{max} >
0$. Nevertheless, due to the cumbersome and rudimentary computations, we just
present the result.
The natural lower bound we arrive to the upper bound
of the function $\kappa$ for the $p$ segments curve can be found
to be
\begin{align*}
    \kappa \leq
    \frac{2}{\ep}\frac{||\varphi||_{\infty}}{||\proj_{\co(S)}(0)||_2^3}\sum_{j=1}^{p}\sum_{i=j+1}^{p+1}&||\PP_i \wedge \PP_j||_2, \\ &\text{ as long as}
    \quad 0 \notin \co(S).
\end{align*}
where $S := \{\PP_i\}_{i=1}^{p}$, $\PP_i := P_i-P_{i-1}$, $i \in \{1,\dots,p+1\}$ and $\proj_{\co(S)}(0)$ is the unique
element $s \in \R^n$ such that $d(0,S) = ||s||_2$, that is, the projection of $0$
onto $\co(S)$. 
Nevertheless, from this equation one can see that if several points are collinear,
then $0 \in \co(S)$, increasing the upper bound, which is contradictory to the
fact that the curvature shall decrease. Thus, we conclude that, the natural
generalization of the three point two segment approach cannot be used for the
$p+1$ points $p$ segments approach.

We propose the following methodology. Suppose we have $p \in \N$ segments with $p \geq 2$. Using  \eqref{eq:UpperBoundCurvature} and given a maximum curvature $\kappa_{\max} > 0$, we can compute for each pair of consecutive segments its respective\footnote{The index $i$ ranges from the first pair to the last pair of consecutive segments.} $\ep_{i} > 0$ such that, under the three-point-two-segment approximation, their curvatures are upper bounded. If $\ep_{i} < \frac{1}{2}$ for all $i$, then \eqref{eq:UpperBoundCurvature} is exact, because only the two segments used for computing $\ep_i$ contribute to the mollification at the junction point. In this case, take $\ep = \max_{i}\ep_i$, which is valid because the only dependence on $\ep_i$ in the right-hand side of \eqref{eq:UpperBoundCurvature} is through $\frac{1}{\ep_i}$; hence, $\ep$ satisfies the bound for each pair of segments. If $\ep_{i} > \frac{1}{2}$ for some $i$, then \eqref{eq:UpperBoundCurvature} becomes an approximation. In this case, one can either accept an admissible error or use $\ep = \max_{i}\ep_i$ as an initial condition for an optimization algorithm that seeks the minimum $\ep > 0$ that upper bounds the curvature. In either case, \eqref{eq:UpperBoundCurvature} is a powerful, computationally inexpensive tool that can be used to either compute an exact upper bound for the complete trajectory or reduce computation time in an optimization algorithm.

\section{Numerical validations and real experiments}
\label{sec: exp}
To demonstrate the effectiveness of our path generation approach, we first 
show a comparison between the path generated by different spline methods and
the one by mollification. We also present both numerical and experimental results for path following of a mollified path by a unicycle vehicle. Specifically, we employ the Singularity-Free Guiding Vector Fields (SF-GVF) path following algorithm \cite{WeijiaGVF, weijiaarticlegvf}. In brief, SF-GVF takes a parametric path $f\in C^2(\R,\R^n)$ as input and constructs a vector field $\chi \in C^2(\R^n,\R^n)$ whose flow traces the mollified path.

\subsection{Comparison with traditional interpolation methods}

We recall that mollification can generate paths from more general inputs than those for which interpolation is possible, such as the input that considers only $x\in\mathbb{Q}$ in Remark \ref{rem: Q}. Nonetheless, to compare traditional interpolation methods with our path generation via mollification, consider a collection of points in $\mathbb{R}^2$ that are linearly interpolated, and their $C^2$ cubic splines, B-splines and quintic Hermite interpolation splines, i.e, polynomial splines that are twice continuously differentiable and which are computed to specifically pass through the pre-defined collection of points. We compare the generated paths between
the spline approaches and the proposed mollification in Figure \ref{fig:ComparisonWithSplines}.
\begin{figure}
    \centering
    \includegraphics[width=\linewidth]{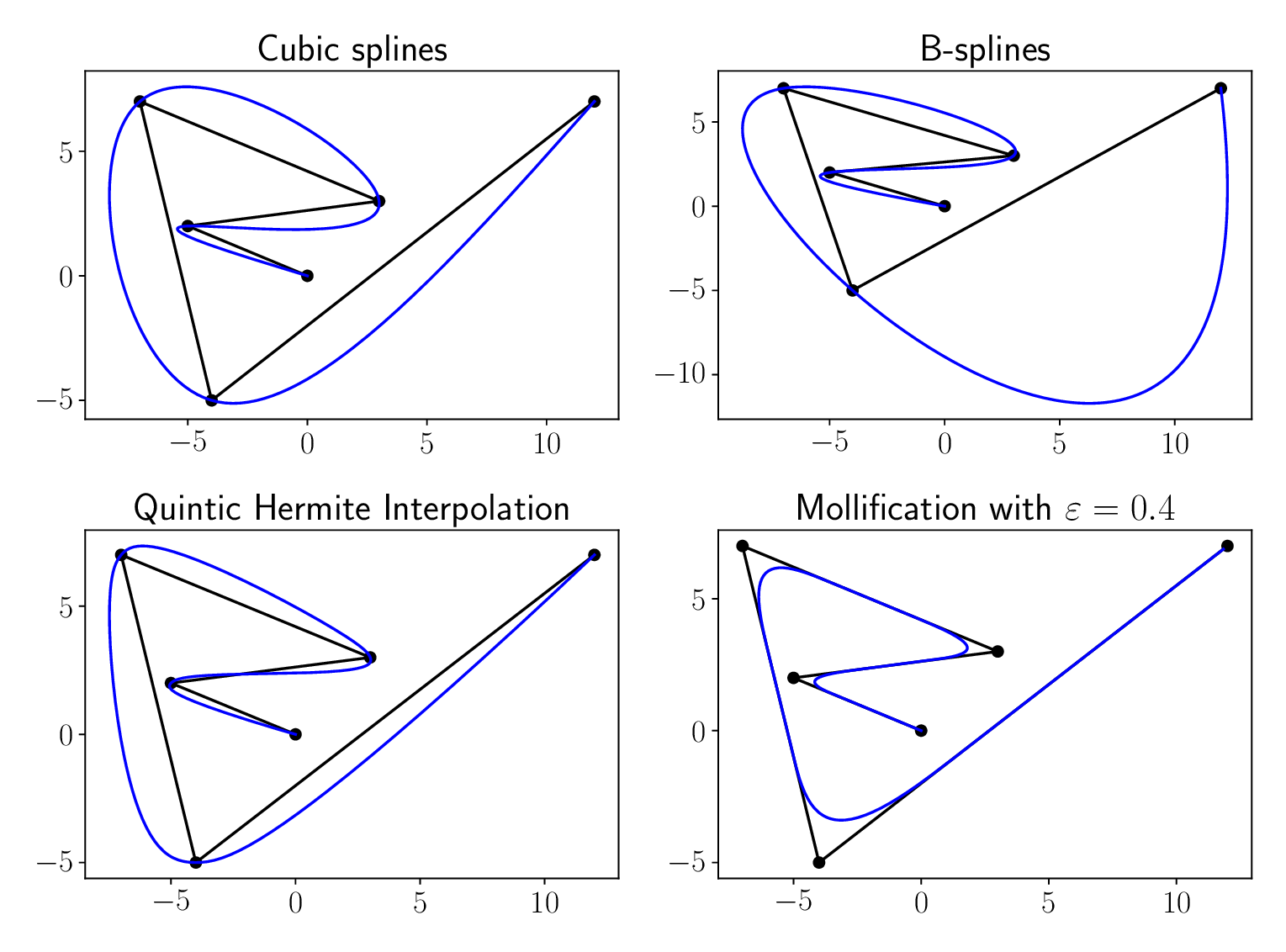}
    \caption{Comparison of different approach for path generation. The blue
    line represents the path generated from the linearly interpolated sequence
    of points shown in black. The mollifier used is the one presented in
    Example \ref{example:OurMollifier} with $\ep = 0.4$.}
    \label{fig:ComparisonWithSplines}
\end{figure}
As it can be seen, the mollification approach is the method that resembles the 
original function the most. Moreover, due to Proposition \ref{prop:EqualityInSets}
we know \textit{exactly} in which intervals the original function and the mollified
one coincide. Clearly, this is at the cost of not intersecting the collection of points,
except the initial and ending ones. Nevertheless, note that numerically speaking,
the mollification method is by far the simplest one. Indeed, for $C^2$ continuity
there are several conditions that the coefficients of the spline polynomials must met,
and which are extremely sensible to changes in the path. That is, a small change
in the original path, i.e., a small change in a point of the collection, can generate
a big change in the generated path. This is not something that happens in mollification,
since by Proposition \ref{prop:EqualityInSets} and the results of Section \ref{sec: res}
we know exactly in which sets the original path and its mollification coincide, where
is the mollified curve enclosed, as
well as the geometric properties preserved. Finally, we want
to remark an important fact. The polynomials spline here presented are \textit{at most}
$C^2$ continuous at the collection of points, and its maximum curvature is not easy to compute between two 
segments nor the complete spline. For this purpose an
optimization approach is often carried as mentioned in the introduction.
In the mollified case, with just a single
operation, $C^{\infty}$ continuity is obtained, and its derivatives
are easy to compute thanks to Theorem \ref{thm:PropertiesOfMollifying}. We have 
also provided, for
this specific example, an analytical and simple to compute curvature upper bound in (\ref{eq:UpperBoundCurvature}). This implies that mollified
path can be computed or changed online in low-cost platforms.

\subsection{Usage with a path following algorithm}
For the path-following part, we are going to consider the so called ``heart'' function as our input path. Define the function
    \begin{equation*}
        t\in[0,2\pi) \to r(t) = 2 - 2\sin(t) + \sin(t)
        \frac{\sqrt{|\cos(t)|}}{\sin(t)+1.4}.
    \end{equation*}
For $t \in [0,2\pi)$ let $f_1(t) = r(t)\cos(t)$ and $f_2(t) = r(t)\sin(t)$, and we call $f := (f_1,f_2)$  the ``heart'' path. Note that the ``heart'' path is continuous but not differentiable; therefore, it cannot be used for the path following algorithm SF-GVF. We solve this issue by approximating the function using mollifiers. Let $\varphi$ the mollifier presented in Example \ref{example:OurMollifier} and let $\ep_1,\ep_2 > 0$ be real numbers. We then mollify the ``heart'' path $F$ is defined as follows
\begin{equation*}
    F = (F_1, F_2) := (f_1 * \varphi_{\ep_1},
    f_2*\varphi_{\ep_2}).
\end{equation*}

\begin{figure}
    \centering
    \includegraphics[width=\linewidth]{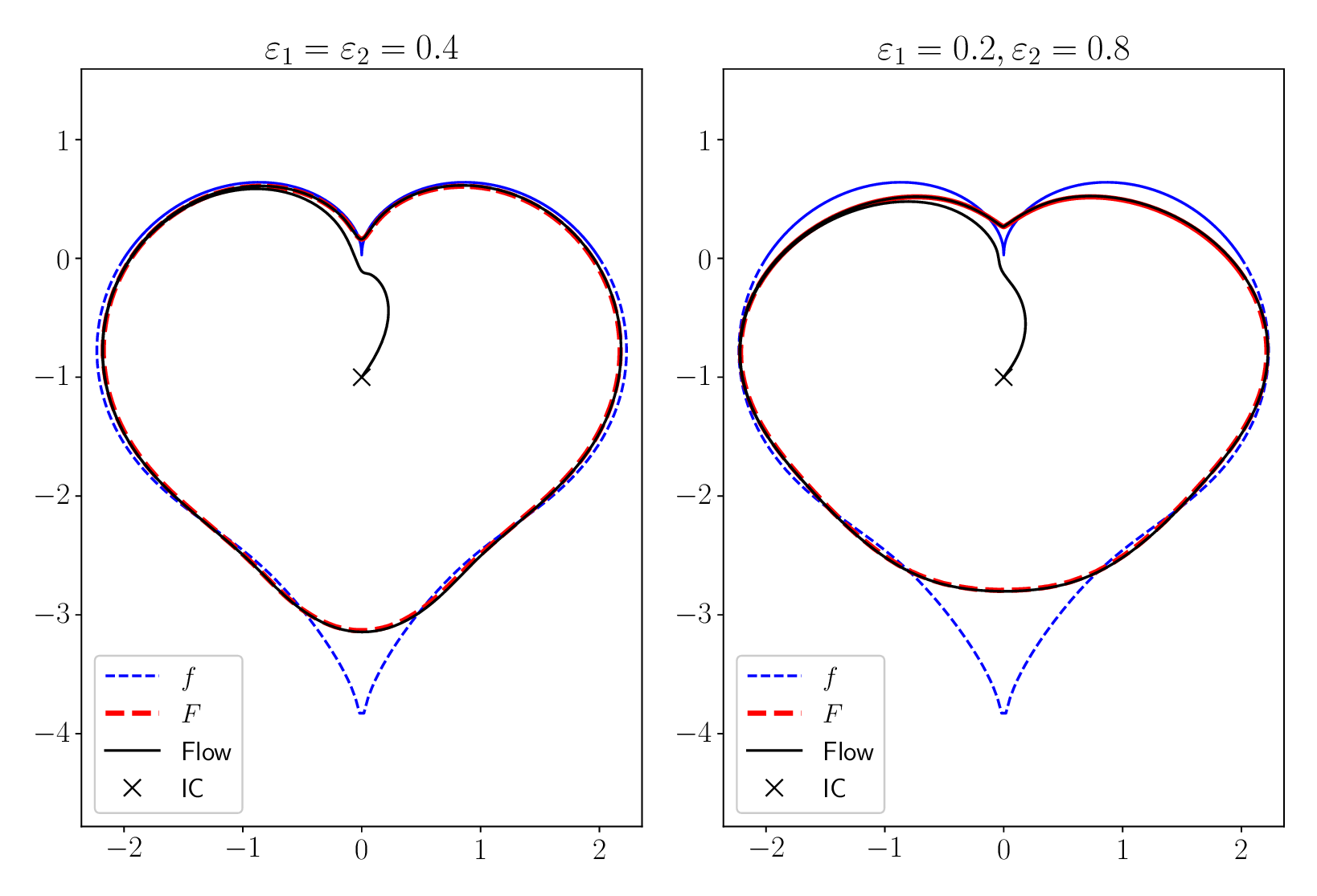}
    \caption{Representation of the numerical simulation. In both pictures the original path $f$,
    is shown as a blue dotted line, while as a red dotted
    line the mollified trajectory $F$ for $\ep_1=\ep_2=0.4$ in the left picture, and 
    for $\ep_1 = 0.2$ and $\ep_2 = 0.8$ in the right picture. The black solid
    line represents the (now smooth) flow generated by the guiding vector field according
    to \cite{weijiaarticlegvf} starting from an arbitrary initial condition (IC).}
    \label{fig:GVFHeartMolliComplete}
\end{figure}

A numerical simulation of the vehicle under SF-GVF using $\ep_1 = \ep_2 = 0.4$ for the ``heart'' path is shown in the left plot of Figure \ref{fig:GVFHeartMolliComplete}. The vehicle's trajectory indicates convergence to the desired mollified path. Moreover, as $\ep \to 0$, the trajectory approaches the original path more closely. Note that the mollified path lies inside the original path, as predicted by Theorem \ref{thm:ConvexHull}. In practical terms, this means that the ``heart'' function can now be used with SF-GVF, extending the applicability of this path following algorithm. Clearly, to improve convergence to the original path, we can reduce both $\ep_1$ and $\ep_2$, since by Theorem \ref{thm:PropertiesOfMollifying} we have uniform convergence on compact sets as $\ep_1,\ep_2 \to 0$. However, to demonstrate the flexibility of the approach, we also consider the case $\ep_1 = 0.2$ and $\ep_2 = 0.8$, whose simulation is presented in the right plot of Figure \ref{fig:GVFHeartMolliComplete}. Note that in the first component, the mollified curve is better adjusted to the original curve, while in the second component, the opposite occurs. This results from $\ep_2$ being four times larger than $\ep_1$. Indeed, the values of $\ep_1$ and $\ep_2$ can be constrained by the vehicle's dynamics. This is a key advantage of the method: by simply adjusting these parameters, we can ensure that the vehicle follows the curve within its dynamic limits, thereby avoiding issues related to reconverging to the path. Moreover, numerical computations show that the length of the original path with respect to the $\ell_1$ norm is (in arbitrary units) $25.58$, while the mollified curve has length $23.16$ in the left case and $21.74$ in the right case of Figure \ref{fig:GVFHeartMolliComplete}, as predicted by Theorem \ref{thm:Length}. The same conclusions from Theorem \ref{thm:Length} can be verified with respect to any other arbitrary $\ell_p$ norm.

\begin{figure}
    \centering
    \includegraphics[width=\linewidth]{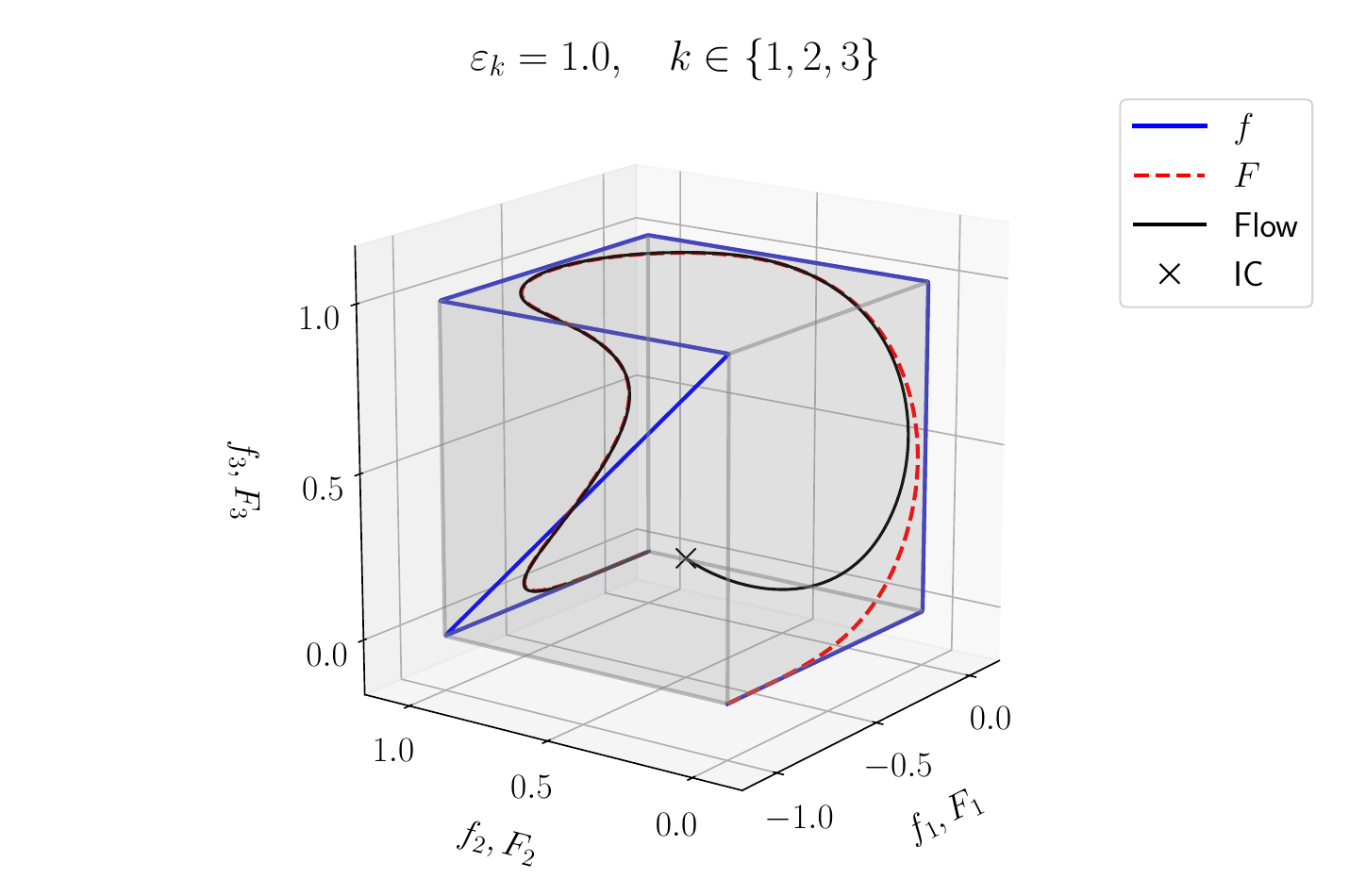}
    \caption{Representation of the three dimensional numerical simulation.
    The original path $f$ is represented as a solid blue line,
    while the mollified trajectory $F = (F_k)_{k=1}^3 = (f_k *
    \varphi_{\ep_k})_{k=1}^3$ is shown as a dashed red line, with $\varphi$ as in
    example \ref{example:OurMollifier} and $\ep_k = 1$ for $k \in \{1,2,3\}$. The black solid
    line represents the (now smooth) flow generated by the guiding vector field according to \cite{weijiaarticlegvf} starting from an arbitrary initial condition (IC).}
    \label{fig:3DMollification}
\end{figure}

\subsection{Three dimensional path mollification}
Finally, for completeness, a numerical simulation of a 3D mollified path is presented in Figure \ref{fig:3DMollification}. The notation used is identical to that in the previous numerical simulations. The original path $f$ is constructed via linear interpolation between a sequence of vertices/waypoints of a three-dimensional cube. As can be seen, $f$ is non-differentiable at these vertices. In contrast, the mollified function $F$ provides a smooth approximation that can be effectively employed in SF-GVF, as illustrated in Figure \ref{fig:3DMollification}. Indeed, the flow of the guiding vector field converges to the mollified trajectory. Moreover, Theorem \ref{thm:Length} can also be validated numerically. In this case, the length of the original path in the $\ell_2$ norm is (in arbitrary units) $7.38$, while the length of the mollified path is $5.48$. An important feature of this approach is its scalability: any trajectory of the form $g : \R \to \R^n$ can be mollified component-wise, producing a sufficiently smooth curve that can be further adapted to satisfy a variety of constraints.

\subsection{Experimental results}
Before presenting the experimental results, we introduce the software and
hardware platforms used in the experiments. We also provide the necessary links
to the developed software so that any interested reader can replicate these
experiments.

\subsubsection{Experimental and Software Platform}
Our experimental platform, shown in Figure \ref{fig:Rover-Hardware}, is a rover modeled as a unicycle, built around a Matek F765-Wing autopilot with an STM32 microcontroller, integrated IMUs, and support for GNSS, compass, and radio receivers---specifically a Matek M10Q-5883, a Futaba 7008SB receiver, and a Zigbee Xbee telemetry radio. The entire system runs on the open-source Paparazzi UAV framework \cite{paparazzo}, which handles autonomous operation, telemetry, and real-time communication with the Ground Control Station (GCS). Through the GCS, the user can issue high-level commands and adjust waypoints on the fly, while the onboard microcontroller recomputes the mollified path in real time whenever a point or parameter is modified, as illustrated in Figure \ref{fig:GCSScreenShot}.
\begin{figure}
    \centering
    \includegraphics[width=\linewidth]{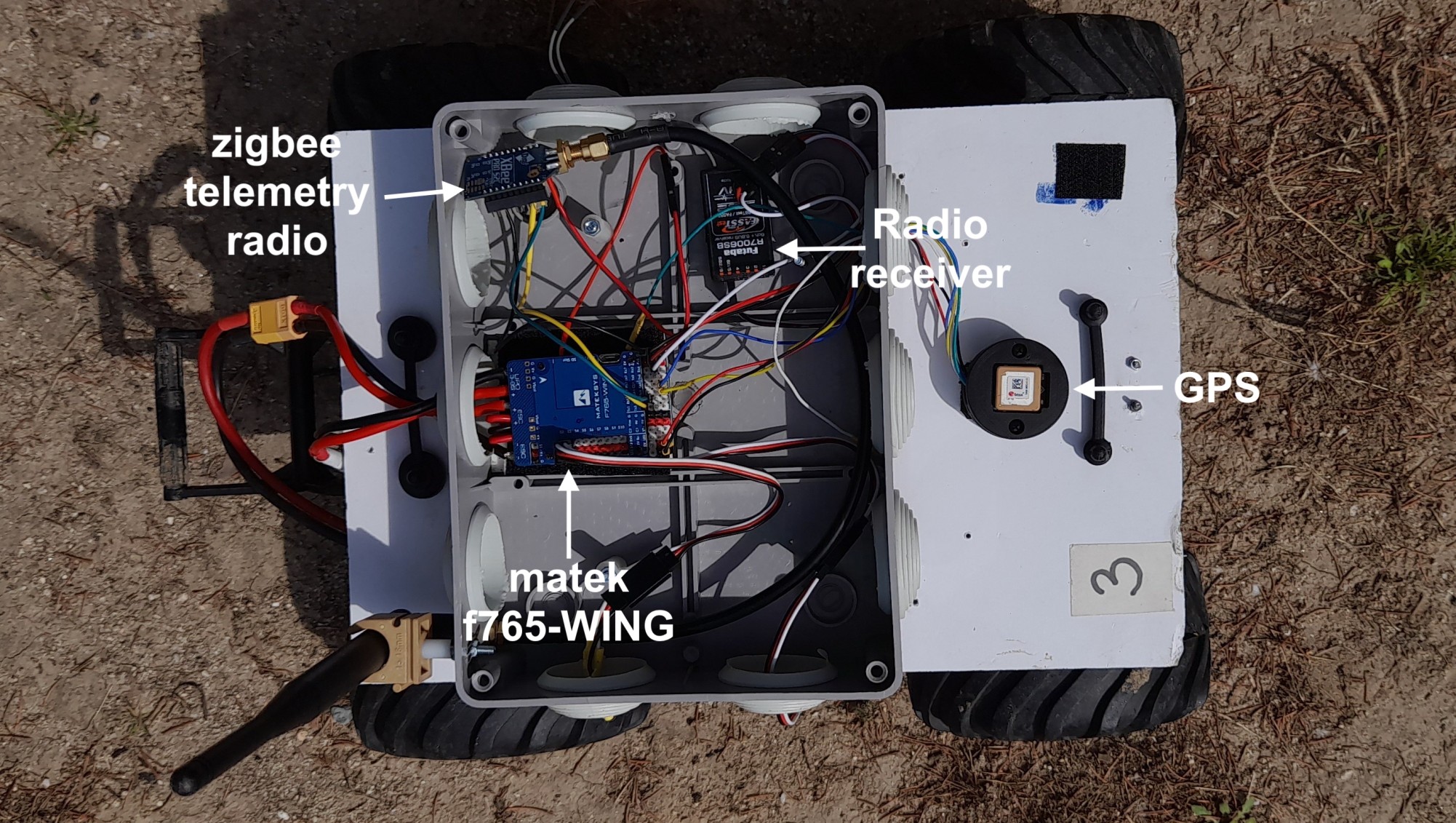}
    \caption{Rover vehicle and hardware used during the experiments.}
    \label{fig:Rover-Hardware}
\end{figure}

\begin{figure}
    \centering
    \includegraphics[width=\linewidth]{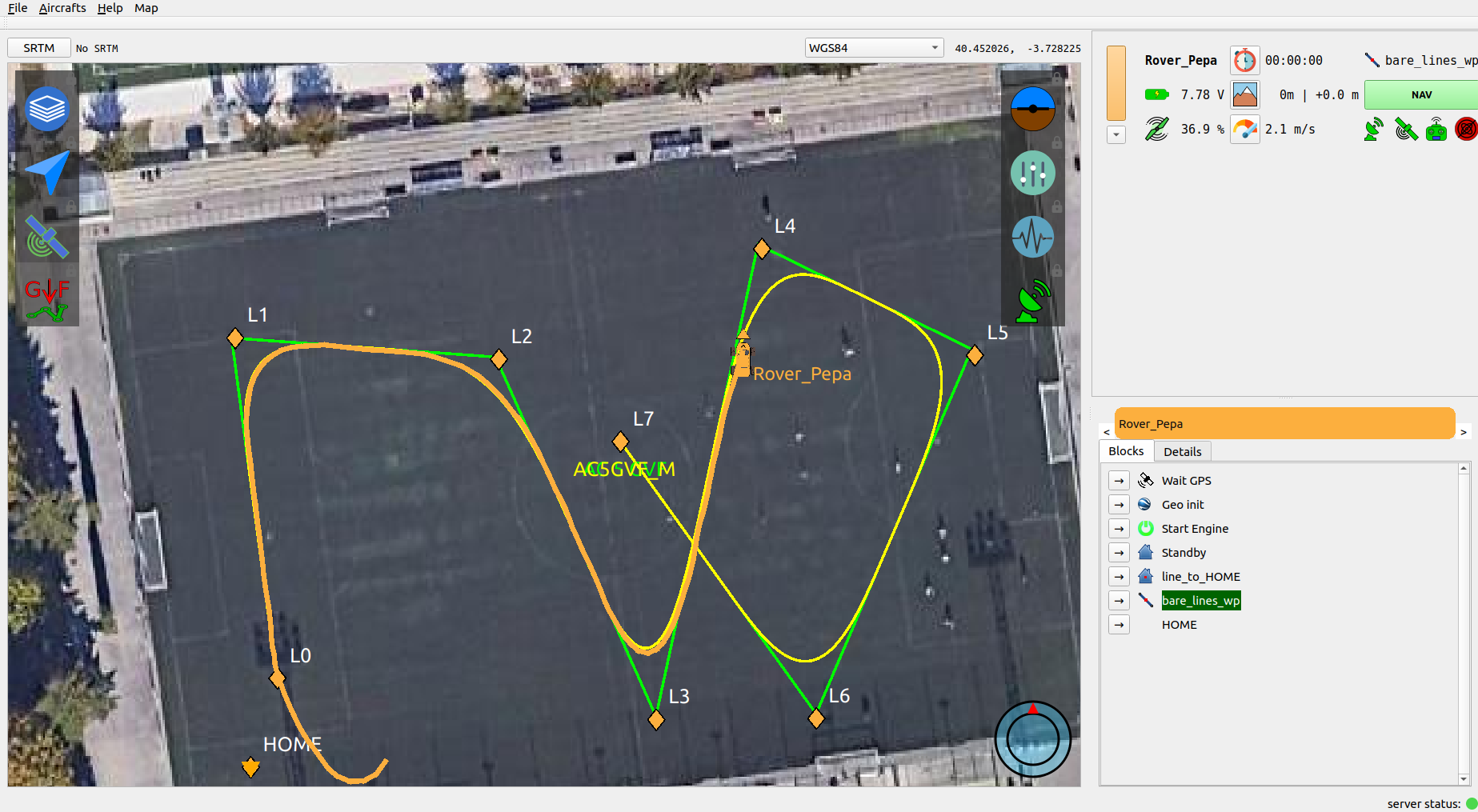}
    \caption{Capture of an experiment using Paparazzi GCS. The original
    trajectory $f$ is shown in green created by linearly interpolating the
    points $L_i \in \R^2$, $i \in \{0,\dots,7\}$. The yellow curve represents
    the mollification of the original trajectory $F=(f_1 * \varphi_{\ep}, f_2 *
    \varphi_{\ep})$ where $\varphi$ is as in Example \ref{example:OurMollifier}
    and $\ep = 0.5$. The orange line represents the trajectory described by the vehicle.}
    \label{fig:GCSScreenShot}
\end{figure}

\subsubsection{Experimental data}
We show our logs in Figure \ref{fig:Experiments}. We created a continuous but non-differentiable path by linearly interpolating points in $\R^2$ as it is done in Section \ref{sec: curvature}. As noted above, any Paparazzi user can create such trajectories by simply moving points in the ground control station before or during the experiment in real time. The experiment uses a curve similar to that in Figure \ref{fig:GCSScreenShot}.


\begin{figure}
    \centering
    \includegraphics[width=1\columnwidth]{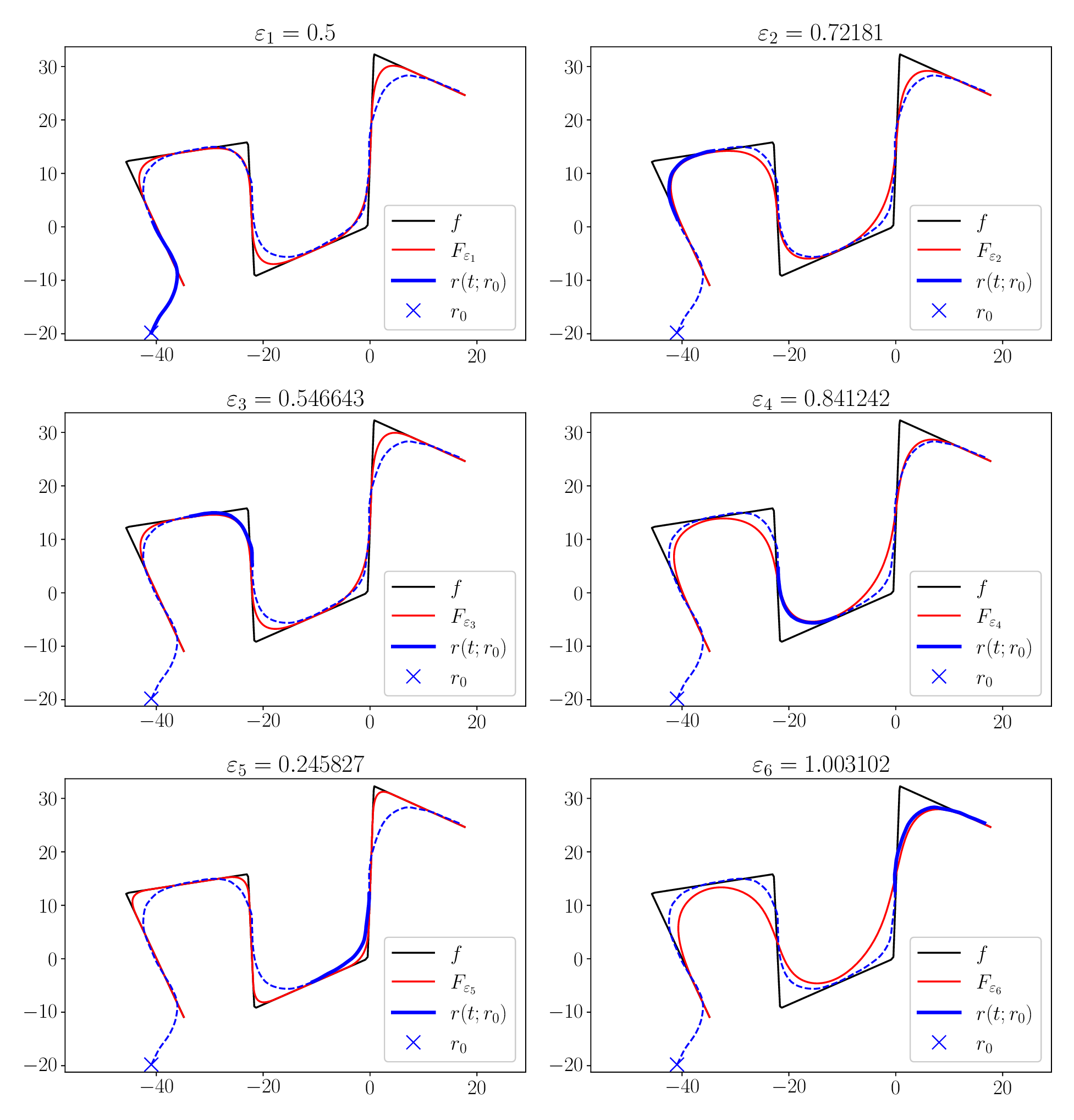}
    \caption{Real experiment of path following of a mollified non-differentiable path with a rover vehicle. The original non-differentiable path $f$ is in black, the mollified family
        trajectories $\{F_{\ep_i}\}_{i=1}^{6}$ to be followed at each \emph{stage} are in solid red, its initial 
        point $r_0$ is denoted as the blue cross, the position of the vehicle for its corresponding $\ep_i>0$ at times $t \in [t_{i-1},t_{i}]$ in a thick solid blue line. Finally, the complete
        trajectory of the vehicle $r(t;r_0)$ throughout all the stages is the dashed blue line.}
    \label{fig:Experiments}
\end{figure}


The experimental objective is as follows: Given a desired non-differentiable path $f$, mollify it with parameter $\ep > 0$ to obtain $F_{\ep} = (f * \varphi_{\ep})$, where $\ep$ is a function of vehicle speed limited by maximum allowed curvature. To demonstrate potential applications, we use a linear relationship between maximum allowed curvature and speed. For speed $v > 0$, with $R_{min}$, $R_{max}$, and $v_{max}$ denoting minimum radius, maximum radius, and maximum speed respectively, the allowed radius of curvature is $R(v) := R_{min} + \frac{v}{v_{max}}(R_{max}-R_{min})$. Note this is merely illustrative; curvature and speed need not be linearly related in practice. We are going to consider a curve consisting 
on seven two dimensional points which are linearly interpolated, as shown in Figure \ref{fig:Experiments}.

The adaptation of the curve to vehicle dynamics operates as follows: at initial time $t_0$, we set $\ep_1 = 0.5$ to generate the first mollified curve $F_{\ep_1}=f * \varphi_{\ep_1}$, where $f$ is the original curve and $\varphi$ is as in Example \ref{example:OurMollifier}. As the vehicle advances and reaches each segment midpoint at times $\{t_i\}_{i=1}^{6}$, the speed is measured and used to compute the maximum allowed curvature, from which the minimum permissible $\ep_{i+1}$ is determined via \eqref{eq:UpperBoundCurvature}. This generates a family of six parameters $\{\ep_i\}_{i=1}^{6}$ and corresponding mollified curves $\{F_{\ep_i}\}_{i=1}^{6}$ that dynamically adapt to vehicle constraints. As shown in Figure \ref{fig:Experiments}, the resulting paths remain close to the original trajectory where curvature permits, e.g., segments four and five, while strongly constraining the curve when necessary, e.g., the last two segments. All theoretical guarantees hold: the mollified curves lie within the convex hull of the original path (Theorem \ref{thm:ConvexHull}), and parameters are sufficiently small that Propositions \ref{prop:LocalConvexity} and \ref{prop:MolliAboveFConvexityLocally} apply. These experiments validate both the theoretical solution of Problem \ref{prob:RegularizationProblem} and the practical viability of our approach on real, affordable hardware.

\section{Conclusions}
\label{sec: con}

In this work, we addressed the problem of efficient path generation to make non-suitable curves, such as linear interpolations from waypoint collections, suitable for path following and trajectory tracking algorithms via mollification. The mollification can be adjusted so that the mollified trajectory approximates the original trajectory arbitrarily closely on compact sets while being completely smooth. Additionally, properties such as convexity, concavity, monotonicity, and quasiconvexity are preserved under mollification, with local versions also preserved for sufficiently small mollification parameters.

We validated the approach through numerical simulations using Singularity-Free Guiding Vector Fields as a path following algorithm, applying mollification to the ``heart'' path and a 3D trajectory while examining the effects of different parameter values. Finally, experiments on rovers demonstrated the viability of the approach and the ability to tune mollified paths to match vehicle dynamics. While the original trajectory may not be physically realizable, the mollified trajectory with appropriate parameters can be successfully followed. This confirms that our results have both theoretical significance and practical value for autonomous vehicles, industrial robotics, and any engineering application requiring fast and rigorous function approximation.







\bibliographystyle{IEEEtran}
\bibliography{biblio}

\begin{IEEEbiography}[{\includegraphics[width=1in,height=1.25in,clip,keepaspectratio]{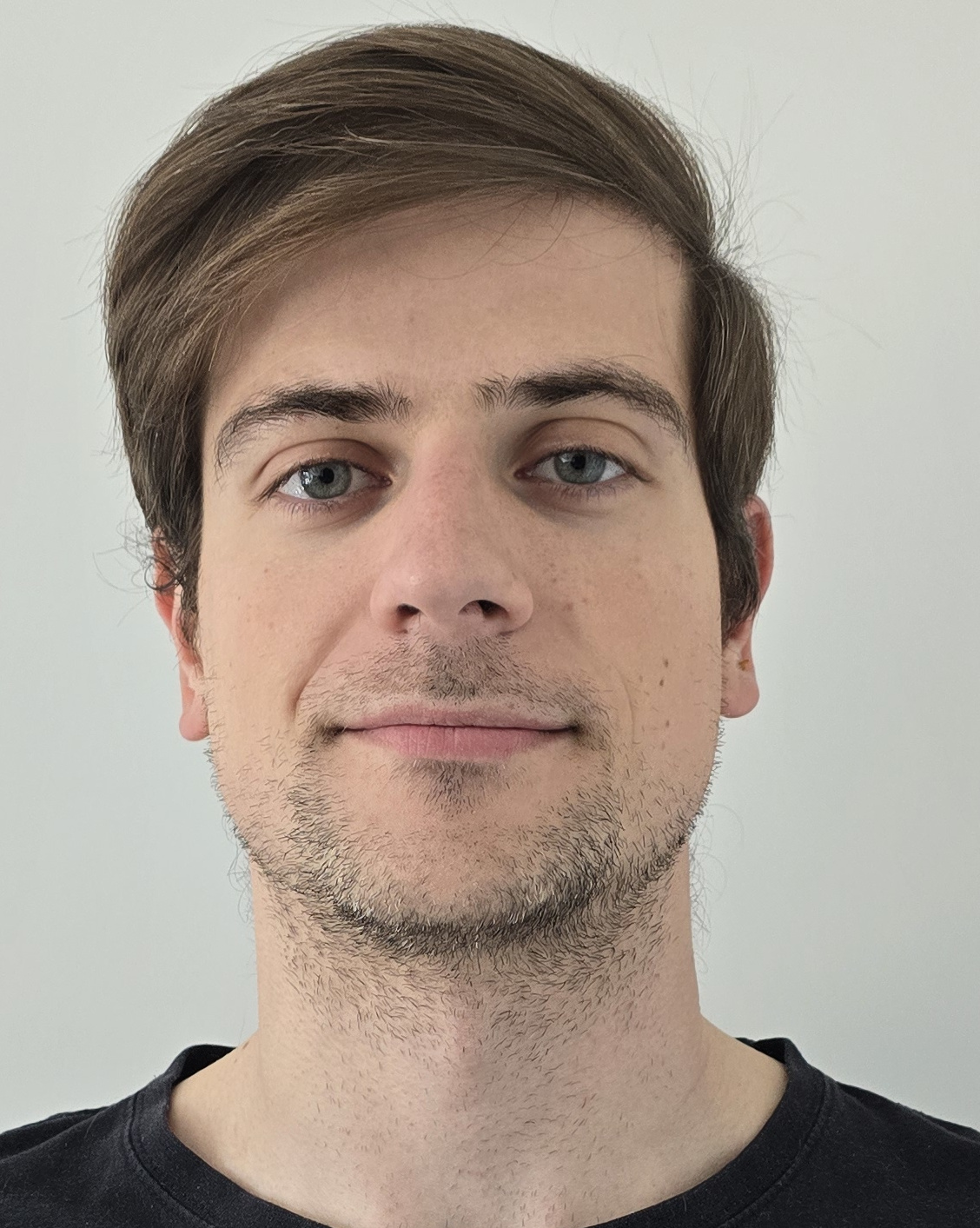}}]{Alfredo
 González-Calvin} earned his Electrical Engineering degree from the Complutense 
University of Madrid in 2023. He earned his two Master's degrees from UNED 
University: one in Control and Systems Engineering in 2024 and another in 
Mathematics in 2025. Currently, he is pursuing a Ph.D. in Physics, focusing on 
path planning and mathematical applications in robotics and engineering.
\end{IEEEbiography}

\begin{IEEEbiography}[{\includegraphics[width=1in,height=1.25in,clip,keepaspectratio]{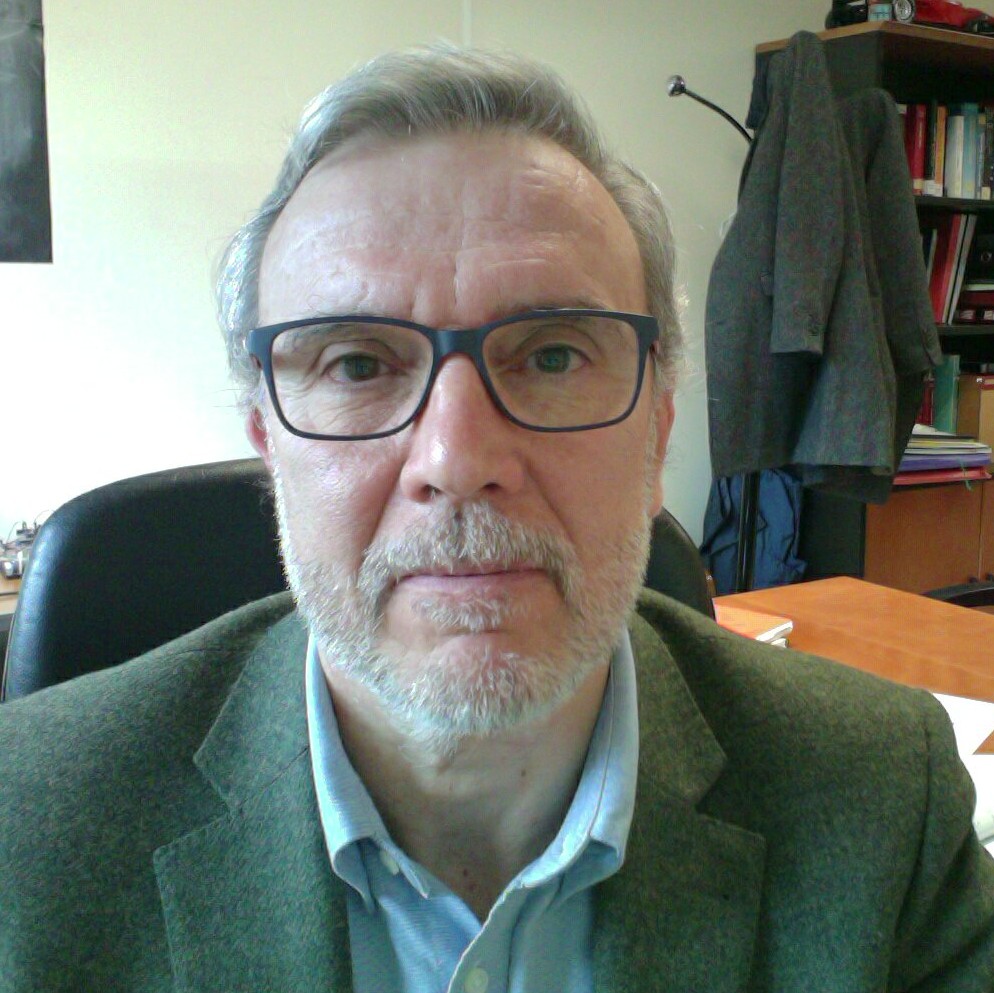}}]{Juan
 Jimenez} graduated in Physics from the Universidad Autónoma de Madrid (Spain) 
in 1986 and earned his Ph.D. in Systems Control in 1999 from the Universidad 
Nacional de Educación a Distancia (Spain). Since 2015, he has been an Associate 
Professor in the Department of Computer Architecture, Systems Engineering, and 
Automation at the Universidad Complutense de Madrid. His research interests 
include distributed control and cooperative control in multi-agent systems, 
with a particular focus on applications to autonomous vehicles.
\end{IEEEbiography}

\begin{IEEEbiography}[{\includegraphics[width=1in,height=1.25in,clip,keepaspectratio]{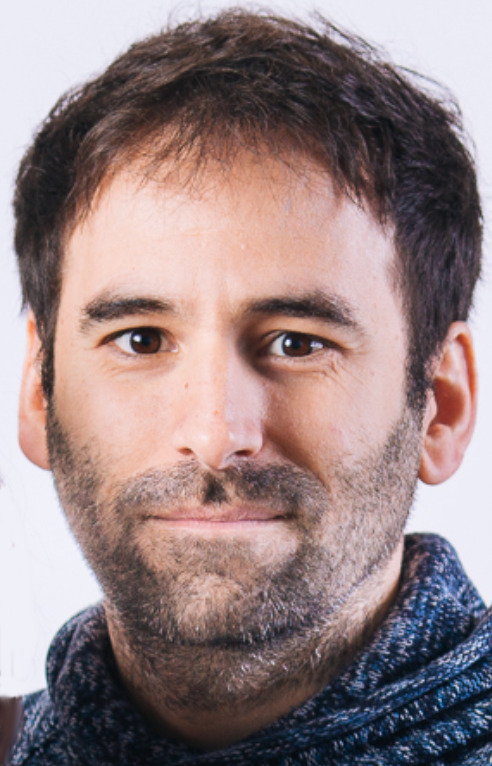}}]{Hector
 Garcia de Marina} (Member IEEE) received the Ph.D. degree in systems and 
control from the University of Groningen, The Netherlands, in 2016. He was a 
Postdoctoral Research Associate with the Ecole Nationale de l’viation Civile, 
Toulouse, France, and an Assistant Professor with the Unmanned Aerial Systems 
Center, University of Southern Denmark, Odense, Denmark. Since 2022, he has 
been a Ramón y Cajal Researcher with the Department of Computer Engineering, 
Automation and Robotics, and with CITIC, Universidad de Granada, Spain. He is 
the recipient of an ERC Starting Grant and was Associate Editor for IEEE 
Transactions on Robotics for four years. His current research interests include 
multiagent systems and the design of guidance navigation and control systems 
for autonomous vehicles.
\end{IEEEbiography}

\end{document}